\def\figref#1{figure~\ref{#1}}
\def\Figref#1{Figure~\ref{#1}}
\def\eqref#1{equation~\ref{#1}}
\def\Eqref#1{Equation~\ref{#1}}
\def\1{\bm{1}}
\DeclareMathAlphabet{\mathsfit}{\encodingdefault}{\sfdefault}{m}{sl}
\SetMathAlphabet{\mathsfit}{bold}{\encodingdefault}{\sfdefault}{bx}{n}
\definecolor{pastelPink}{RGB}{135, 162, 255}
\definecolor{pastelBlue}{RGB}{196, 215, 255}
\definecolor{pastelGreen}{RGB}{255, 215, 196}
\definecolor{pastelYellow}{RGB}{255, 244, 181}
\title{Elliptic Loss Regularization}
\author{Ali Hasan$^{1,2, } $\thanks{equal contribution} $\;^,$\thanks{Corresponding Author} , Haoming Yang$^{2,*}$, Yuting Ng$^{2}$, Vahid Tarokh$^2$\\
$^1$ Machine Learning Research, Morgan Stanley \\
$^2$ Department of Electrical and Computer Engineering, Duke University \\
\texttt{\{ali.hasan, haoming.yang, yuting.ng, vahid.tarokh\}@duke.edu}
}
\newtheorem{proposition}{Proposition}
\newtheorem{lemma}{Lemma}
\tikzset{
  brownian/.style={
    decoration={
      markings,
      mark=at position 0.5 with {\arrow{>}}
    },
    decorate,
    decoration={random steps,segment length=0.75pt,amplitude=1.5pt},
    thin
  }
}
\begin{document}

\maketitle
\vspace{-10pt}
\begin{abstract}
    Regularizing neural networks is important for anticipating model behavior in regions of the data space that are not well represented. 
    In this work, we propose a regularization technique for enforcing a level of smoothness in the mapping between the data input space and the loss value. 
    We specify the level of regularity by requiring that the loss of the network satisfies an elliptic operator over the data domain.
    To do this, we modify the usual empirical risk minimization objective such that we instead minimize a new objective that satisfies an elliptic operator over points within the domain.
    This allows us to use existing theory on elliptic operators to anticipate the behavior of the error for points outside the training set.
    We propose a tractable computational method that approximates the behavior of the elliptic operator while being computationally efficient.
    Finally, we analyze the properties of the proposed regularization to understand the performance on common problems of distribution shift and group imbalance. 
    Numerical experiments confirm the promise of the proposed regularization technique.
\end{abstract}

\section{Introduction}
Designing effective losses is a longstanding goal for training neural networks.
Standard techniques such as Empirical Risk Minimization (ERM) are commonly used for training, but these may overfit to finite samples of data and do not provide any explicit regularization for regions outside the support of the training data.
This can be an issue in the case of overparameterized neural networks, which can minimize the loss up to arbitrary accuracy for data points in the training set but have unknown behavior outside the points in the training set.
For real-world deployment scenarios, this can be especially concerning where data shifts and imbalances may be present.
In response to this limitation, various techniques have been proposed to regularize the loss landscape as a function of input samples of a classifier, e.g.~\citep{ hernandez2018data,wang2021regularizing,balestriero2022effects}.

\begin{figure}[ht]
    \centering
    \vspace{-10pt}
\begin{tikzpicture}[node distance=3cm,auto,>=stealth,scale=0.8,transform shape]
\node[] (image) at (0,0) {\includegraphics[width=0.9\textwidth]{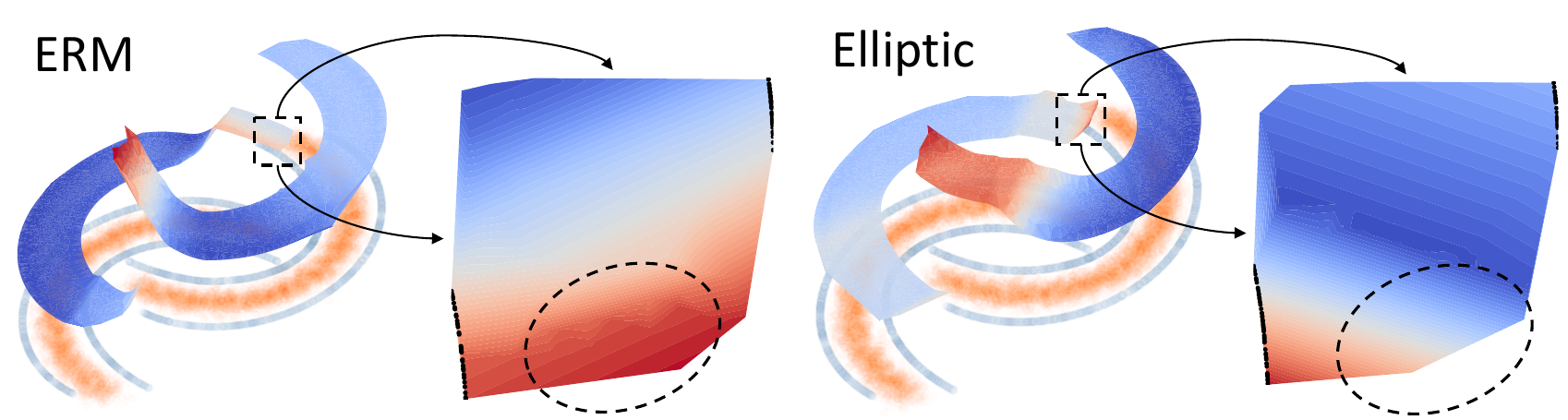}};
\draw[->, >=stealth, thick] (-0.8,-1) -- (-0.3,-1.9);
\draw[->, >=stealth, thick] (-2.6,-1.3) -- (-2.,-1.9);
\node[text width=4.5cm,align=center] at (-0.8, -2.1) { $\ell_{\text{boundary}} < \ell_{\text{interior}}$ };
\node[text width=4.5cm,align=center] at (-0.8, -2.5) { No Maximum Principle};
\draw[->, >=stealth, thick] (5,-1.1) -- (5.5,-1.9);
\draw[->, >=stealth, thick] (3.9,-1.2) -- (4.3,-1.9);
\node[text width=4.5cm,align=center] at (5, -2.1) { $\ell_{\text{boundary}} > \ell_{\text{interior}}$ };
\node[text width=4.5cm,align=center] at (5, -2.5) { Maximum Principle};
\end{tikzpicture}
\caption{Loss surface of the two moons dataset. The scattered points illustrate the training (blue boundary) and testing (orange interior) samples. The surface represents the loss values of a well-trained classifier that classifies samples to their respective moon. Zooming in on the interior of the ERM loss surface, the training loss exceeds the loss of boundary (circled area) whereas elliptic regularization bounds the loss via the maximum principle of elliptic PDEs.}
    \label{fig:fig1}
    \vspace{-10pt}
\end{figure}

In this paper, we consider a viewpoint for regularization that stems from the properties of elliptic differential operators where we impose that the loss satisfies an elliptic partial differential equation with boundary data given by the loss.
By imposing a loss that satisfies this operator, we can interpret the expected loss through the lens of the powerful theory developed for elliptic operators and control the parameters of the operator to enforce desirable properties of the learned function approximator.

\paragraph{Why the PDE Perspective?} PDEs are useful for characterizing the joint rates in change of different variables.
Imposing constraints on how the loss changes as a function of perturbations in the input space is helpful in understanding the behavior of the learned function approximator under different transformations. 
Additionally, minimizing the loss under the PDE constraints allows us to use the tools from PDEs to qualitatively describe properties of the loss function under the estimated function; an example is given in~\figref{fig:fig1} where we show how the \emph{maximum principle} of elliptic operators is used to bound the loss of the estimator.
This provides valuable insights into the robustness of the model for regions outside of the training data.
To that end, we are mainly interested in the qualitative behavior of the operator and its implications in two major problems related to generalization in machine learning:

\begin{tcolorbox}[colframe=blue!25,
colback=blue!10,
coltitle=blue!20!black]
    \textbf{Data Shifts:} How does the operator influence the loss under \emph{shifts} in the data distribution?
    
    \textbf{Data Imbalance:} How does the operator influence the loss for \emph{underrepresented} groups?
\end{tcolorbox}

More explicitly, suppose we have data pairings of $(X,y)$ with features $X$ and target variables $y$.
Data shifts are data that are collected under different conditions but represent the same phenomena, i.e. $(T(x), y)$ for different transformations $T$ applied to $X$.
Data imbalance is the existence of subgroups of $y$ within the dataset that may be more or less frequently represented. 
By carefully choosing the parameters of the elliptic operator, we can show how behavior of the approximator can be anticipated under these regimes. 

To summarize, by grounding our regularization in the rich theory of elliptic PDEs, we provide a principled approach for understanding the generalization properties of neural networks, specifically:
\begin{enumerate}[itemsep=-1pt, leftmargin=*]
\vspace{-5pt}
    \item We describe a new regularization that corresponds to the solution of an elliptic PDE;
    \item We theoretically characterize the practical properties of this regularization through PDE theory;
    \item We introduce an efficient computational approach that endows the properties of the elliptic regularization.
\end{enumerate}

\section{Background}
Our goal is to investigate a loss that promotes functions that address the two above problems while minimally affecting performance related to our target tasks.
We focus on problems of estimating a mapping between features and target variables, such as those in classification and regression tasks. 
To impose the desired regularity on our mapping, we can consider a few different techniques.
On one hand, we can directly restrict the hypothesis class of functions from which we estimate our mapping to ensure those contain the relevant properties. 
On the other hand, it may be difficult to a priori know which class of functions is sufficient for the approximation task.
We instead consider an approach that regularizes the loss over the data space without explicitly constraining the function class, which is implemented through a specific data augmentation.

\paragraph{Related Work}
The connection between regularization and data augmentation has been studied in a number of settings \citep{balestriero2022effects,geiping2022much}. A particular instance of augmentations is the \emph{mixup} algorithm~\citep{zhang2017mixup}. 
This algorithm has empirically shown to create more robust classifiers that are better suited to perturbations of the data, provide a level of uncertainty estimation with respect to new samples, and improve performance for classes with few samples~\citep{zhang2017mixup}. These properties lead to many extensions of mixup; one of which is mixupE~\citep{zou2023mixupe}, which imposes an explicit directional derivative regularization to improve the mixup algorithm. The class of mixup algorithms has been theoretically analyzed from a regularization perspective~\citep{carratino2022mixup, zhang2020does}.
Various statements have been proved such as in~\citet{zhang2020does} which showed that the loss at the interpolated points is an upper bound for $\ell_2$ adversarial training loss. The regularization and generalization effect of mixup led to development of mixup based robust optimization methods such as
UMIX~\citep{han2022umix}, which combines mixup and weighting algorithm to achieve robust training in classification; and c-mixup~\citep{yao2022c}, which samples based on the distribution of regression label distance for adapting mixup to the robust regression task.

Motivated by the theoretical connection between regularization and data augmentation, we propose a PDE based regularization that can be implemented through a specific type of random data augmentation. This new regularization, which we refer to as \emph{elliptic regularization}, is highly flexible and applies to arbitrary nonnegative loss functions in both regression and classification tasks.

Elliptic regularization focuses on improving optimization under the data shifts and data imbalance scenarios which are widely studied. 
Algorithms such as DRO~\citep{sagawa2019distributionally} and DORO~\citep{zhai2021doro} were introduced to optimize over long-tail distributions and be robust against worst-case shifts in the data distribution. 
JTT~\citep{liu2021just} and UMIX~\citep{han2022umix} are two popular multi-stage methods that weigh data during training based on the earlier results. 
While most of the above algorithms deal with classification tasks, c-mixup has emerged as a robust optimization method over regression~\citep{yao2022c}. These algorithms, including ours, are group-oblivious where the underlying subpopulation that causes data shifts and imbalance is unknown. We discuss group-informed methods in Appendix~\ref{app:addcomp}.

\section{Problem Setup}
Let's first define a region which we will consider our domain. 
Given observations $\{(X,y) \in \mathcal{X} \times \mathcal{Y}\}$ with $\mathcal{X} \subset \mathbb{R}^d$, $\mathcal{Y} \subset \mathbb{R}^k$ we will define $\mathcal{D} = \mathcal{D}_X \times \mathcal{D}_y \supset \mathcal{X} \times \mathcal{Y}$ as a subset of $\mathbb{R}^d \times \mathbb{R}^k$ with nonzero Lebesgue measure.
For purposes of analysis we will take $\mathcal{D} $ as the convex hull of $\mathcal{X} \times \mathcal{Y}$ denoted as $\mathcal{C}_{\mathcal{X} \times \mathcal{Y}}$.
Additionally, denote the empirical measure of points within a training set $\mathcal{X} \times \mathcal{Y}$ as $\delta_{\mathcal{X}\times \mathcal{Y}} \equiv \frac{1}{|\mathcal{X}\times\mathcal{Y} |}\sum_{i=1}^{|\mathcal{X}\times\mathcal{Y}|} \delta_{(X,y)_i}$, where $|\cdot|$ denotes the cardinality of a set.
We want to minimize the risk $\ell :\mathcal{D}_y \times \mathcal{D}_y \to \mathbb{R}_+$ over all the points in the training set for a mapping $f_\theta : \mathcal{D}_X \to \mathcal{D}_y$ with parameters $\theta$.

We will disregard any implicit regularization associated with the optimization procedure associated with the minimization problem (i.e. the associated regularization when minimizing such problems using gradient descent).
With these in mind, we will consider different flavors of the risk minimization problem and analyze the corresponding regularization from imposing additional terms. 

Let us compare two frameworks for minimizing a risk $\ell$, the first performs  empirical risk minimization (ERM) and the second minimizes the risk after applying a class of transformations $\mathcal{T}_X, \mathcal{T}_y$ parameterized by $\phi$ which we will refer to as transformed empirical risk minimization (TRM):
\vspace{-4pt}
\begin{tabularx}{\linewidth}{cccccc}
&
\parbox{5.2cm}{\begin{equation}
    \min_\theta \mathbb{E}_{\delta_{\mathcal{X}\times\mathcal{Y}}} [ \ell (f_\theta(X), y)] \quad \tag{ERM}\label{eq:erm}
\end{equation}}
&
&
\parbox{6.8cm}{\begin{equation}
    \min_\theta \mathbb{E}_\phi \mathbb{E}_{\delta_{\mathcal{X} \times \mathcal{Y}}} [\ell (f_\theta(\mathcal{T}_X^\phi X), \mathcal{T}_y^\phi y )] \quad \tag{TRM} \label{eq:tform}
\end{equation}}
\end{tabularx}
From~\eqref{eq:erm} we can see that this optimization, which takes place over a discrete set of points, relates to the issues we are trying to address. 
Since $\delta_{\mathcal{X} \times \mathcal{Y}}$ is a set of measure zero, its unclear what the interpolating behavior of $f$ is for any subset of $\mathcal{X}$ or how $\ell(f_\theta(X), y)$ will behave for any subset of $(X, y) \in \mathcal{D}$.
\Eqref{eq:tform} on the other hand may cover more of $\mathcal{D}$ depending on how $\phi$ is chosen.

Instead, we can consider a loss that is defined over all of $\mathcal{D}$ by modifying expectation in~\eqref{eq:erm} in a number of different ways.
One way is explicitly regularizing the class of functions $f_\theta$ such that the behavior of $\ell(f_\theta(X),y)$ over all $\mathcal{D}$ can be anticipated (for example, making $f_\theta$ Lipschitz would constrain the growth between two points).
However, explicitly regularizing $f_\theta$ can be difficult since constraining the function class when $f_\theta$ is a neural network may require defining specific architectures. 
Alternatively, we can borrow techniques often used in computer vision problems (e.g.~\citep{wang2021regularizing, yang2023sample}) by applying transformations or augmentations $\mathcal{T}_X^\phi, \mathcal{T}_y^\phi$ with parameters $\phi$ to $X,y$ such that a new objective in \eqref{eq:tform} is minimized
This also has the benefit that we now can \emph{sample} transformations rather than directly constrain the class of functions. 
The question now remains: what class of transformations would be useful to regularize $\ell(f_\theta(X), y)$ over $\mathcal{D}$?

\section{Elliptic Loss Landscapes}

Our answer to this question involves modifying the ERM problem to construct loss that satisfies the elliptic operator.
We introduce all components of the operator in three different parts.
To first develop intuition behind the loss, we describe an elliptic operator that has only Laplacian terms that we wish to constrain.
Using this example, we then describe how this operator can be imposed through its stochastic representation given by the Feynman-Kac theorem, providing both a connection to the data augmentation regularization initially described and a technique for implementation.
Finally, using this stochastic representation, we describe the full method that includes low order terms through an importance sampling framework.  

We define the \emph{loss landscape} as a function $u(X,y) : \mathcal{D} \to \mathbb{R}_+$\footnote{Note that this is different from the loss landscape definition which is given as a function of training iterations and considers perturbations in the parameter space.}.
Our goal is to prescribe the function $u(X,y)$ with a specified level of regularity over the data space in a way that also imbues $\ell(f_\theta(X),y)$ with regularity and thereby obtain the desired properties listed above.
We propose doing this by solving a new minimization problem defined by the following equations:
\begin{align}
    \nonumber \min_\theta u(X,y), & \quad (X,y) \in \mathcal{D} \\
    0 = \sigma\nabla^2 u (X,y),  &\quad (X,y) \in \mathcal{D} \label{eq:pde} \\ 
    u(X,y) = \ell(f_\theta(X), y),  &\quad (X,y) \in \partial \mathcal{D}
    \label{eq:bc}
\end{align}
where $\nabla$ is taken with respect to $(X,y)$, $\sigma > 0$ is a coefficient related to the Brownian diffusion which will be later discussed, and $\partial \mathcal{D}$ represents the boundary of the domain. 
The key aspects of this formulation are: {\bf 1.} the constraint in~\eqref{eq:pde}, corresponding to the elliptic operator, ensures a certain level of regularity within the domain beyond the training data; {\bf 2.} the boundary condition in~\eqref{eq:bc} connects the loss landscape $u$ to the neural network loss $\ell(f_\theta(X), y)$.
The regularity imposed by the PDE in~\eqref{eq:pde} describes the behavior for regions of the space away from the observations.
Specifically,~\eqref{eq:pde} represents the steady state of the heat equation, which diffuses the boundary data from~\eqref{eq:bc}.
The diffusive behavior of this equation defines the regularity of the loss based on the \emph{closest points} in the observation set for regions undefined in the observation set.  
Additionally, the equation provides a direct connection to an It\^o diffusion process which we will discuss in the next section.

\paragraph{Interpreting the loss landscape}
The loss landscape $u$ can be understood as the expected loss under a new data point $(X,y) \in \mathcal{D}$. 
$u$ satisfies an elliptic PDE with boundary data given by the points in the training set.
Since the points in the training set provide all the information we have about the particular phenomena we are representing, the information at these points is diffused according to~\eqref{eq:pde}.

\subsection{Solving the regularized problem}
Computing~\eqref{eq:pde} requires solving a PDE over a high dimensional space, which can be computationally infeasible. 
Fortunately, solutions to PDEs of the type in~\eqref{eq:pde} have a Monte Carlo representation which allows scalable computation of solutions.
This is formalized through the Feynman-Kac formula, which describes the relationship between expectations of stochastic processes and PDEs~\citep{oksendal2003stochastic, pardoux2014stochastic}.
For self-containment, we provide additional descriptions of the Feynman-Kac formula as it relates to the problem we are solving in Appendix~\ref{sec:fk}.
Using this form, the solution to~\eqref{eq:pde} can be written as the following expectation
\begin{equation}
    u(x,y) := \min_\theta \mathbb{E}\left [ \ell(f_\theta(x_\tau),y_\tau) \mid x_0 = x, y_0 = y \right]
    \label{eq:pde_fk}
\end{equation}
where $x_t, y_t$ satisfies
$\mathrm{d}(x_t, y_t) = \sigma \, \mathrm{d}W_t,$
where $\tau = \inf \{ t > 0 \mid (x_t,y_t) \in \mathcal{X} \times \mathcal{Y} \}$, $
\sigma > 0$ is the diffusion coefficient, and $W_t$ is standard Brownian motion, and sample path distribution $q$.
Since $\delta_{\mathcal{X}\times \mathcal{Y}}$ Lebesgue measure zero, we will almost surely never hit a single point in the set, leading to a $\tau$ being infinite.
To circumvent this, we place a ball of size $\varepsilon 
> 0$ around each point and define a new set $\left ( \mathcal{X} \times \mathcal{Y} \right)_\varepsilon \equiv \{x, y \in \mathcal{D} \mid \exists (x',y') \in \mathcal{X} \times \mathcal{Y} \mid  \| (x, y) - (x', y') \| < \varepsilon \}$.
The hitting time is then defined under this new set as $\tau^{(\varepsilon)} = \inf \{ t > 0 \mid (x_t,y_t) \in {(\mathcal{X} \times \mathcal{Y})_\varepsilon} \cup \; \partial\mathcal{C}_{\mathcal{X} \times \mathcal{Y} }\}$, and we compute the expectation with respect to these hitting times.
To relate this to~\eqref{eq:tform}, sample paths of $(x_t,y_t)$ are a particular type of transformation of the data within the domain. 

Using a finite number of samples, we write the empirical expectation as:
\begin{align*}
\min_\theta& \frac1N \sum_{i=1}^N \sum_{j=1}^N \ell\left(f_\theta(x_{\tau^{(i)}_\varepsilon}^{(i)}), y_{\tau^{(i)}_\varepsilon}^{(i)}\right) \quad
s.t. \quad x_0^{(i)}, y_0^{(i)} = x^{(j)}, y^{(j)}.
\end{align*}
where $ \tau^{(i)}_\varepsilon = \inf \{ t > 0 \mid (x_t, y_t)^{(i)} \in {(\mathcal{X} \times \mathcal{Y})_\varepsilon}\} $.
Sample paths of $(x_t,y_t)$ are generated using a standard Euler-Maruyama method.
Qualitatively, as $\varepsilon$ increases, the behavior of the solution becomes piecewise constant at each point in the training set.
As $\varepsilon$ decreases, the solution becomes more diffusive where each data point becomes a point source. 

\subsection{Generalizations of the Operator}
\label{sec:first_order}
The form of the PDE described in~\eqref{eq:pde} is motivated by the diffusive properties of the second order derivative, which has the property that the loss values on the boundaries diffuse at a rate proportional to $\sigma$.
Including only second order derivatives is not strictly necessary, however.
The equation can be easily generalized to include lower order terms by including an \emph{advection} term which acts as a deterministic term in the sense that information is being propagated according to an expected trajectory rather than diffused.
The connection between the stochastic process $(x_t,y_t)$ and the operator easily extends to this case.
Through Girsanov's theorem, we can define the Radon-Nikodym derivative $\frac{\mathrm{d}p}{\mathrm{d}q}$ between two path measures $p \ll q$ to redefine the optimization criterion as
\begin{equation}
u_p(x,y) = \min_\theta \mathbb{E}_q \left [\ell(f_\theta(x_\tau), y_\tau) \frac{\mathrm{d} p}{\mathrm{d} q} \mid x_0 = x, y_0 = y \right ]
\label{eq:is}
\end{equation}
which has the effect of reweighting some of the sample paths of $(x_t, y_t)$. 
This results in $u$ satisfying a new PDE with lower order terms given by the structure of $\frac{\mathrm{d} p}{\mathrm{d} q}$ in 
\begin{equation}
0 = -\frac12 \sum_{i=1}^{d+k} a_{i} \frac{\partial^2u}{\partial z_i^2} + b^\top \nabla u 
\label{eq:linear_pde}
\end{equation}
which we assume is parameterized by a drift function $b : \mathcal{D} \to \mathbb{R}^{d + k}.$
This reweighting term can then be interpreted with respect to the problems we are trying to solve.
For example, regarding the problem of data imbalance, sample paths corresponding to the underrepresented data groups can be reweighed to greater influence the loss.
This provides a connection to existing loss functions, such as the focal loss in~\citet{lin2017focal}, where points within the training distribution are weighted according to their loss values if we choose $b$ to be a function of $\ell(f_\theta(X), y)$. 

\section{Practical Considerations of the Regularizer}
\begin{wrapfigure}{l}{0pt}
\vspace{-50pt}
\scalebox{0.9}{
       \begin{tikzpicture}
    \begin{scope}
    \clip plot [smooth cycle, tension=1] coordinates {(0,0) (1,2) (3,3) (4,1) (2,-1)};

    \fill[path fading=fade out, color=pastelPink] (1,2) circle (1.5);
    \fill[path fading=fade out, color=pastelBlue] (3,3) circle (1.5);
    \fill[path fading=fade out, color=pastelYellow] (4,1) circle (1.5);
    \fill[path fading=fade out, color=pastelGreen] (0,0) circle (1.5);
    \end{scope}

    \draw plot [smooth cycle, tension=1] coordinates {(0,0) (1,2) (3,3) (4,1) (2,-1)};

    \node[draw=black, fill=white, circle, inner sep=3pt, label=below:$X^\star$] (interest) at (2.1,0.0) {};

    \node[draw=black,fill=pastelPink, circle, inner sep=5pt, label=above:$X^{(1)}$] (bc1) at (1,2) {};
    \node[draw=black,fill=pastelBlue, circle, inner sep=5pt, label=above:$X^{(2)}$] (bc2) at (3,3) {};
    \node[draw=black,fill=pastelYellow, circle, inner sep=5pt, label=right:$X^{(3)}$] (bc3) at (4,1) {};
    \node[draw=black,fill=pastelGreen, circle, inner sep=5pt, label=left:$X^{(4)}$] (bc4) at (0,0) {};
    
    \draw[brownian] (interest) -- (bc1);
    \draw[brownian] (interest) -- (bc2);
    \draw[brownian] (interest) -- (bc3);
    \draw[brownian] (interest) -- (bc4);
    
\end{tikzpicture}
}
\caption{Illustration of the loss values over a domain with $4$ points on the boundary. The expected loss at point $X^\star$ is composed of losses at $\varepsilon-$balls around $X^{(i)}, i=1\ldots4.$ Black paths represent sample paths starting at $X^\star.$} 
\label{fig:fk_cartoon}
\vspace{-30pt}
\end{wrapfigure}
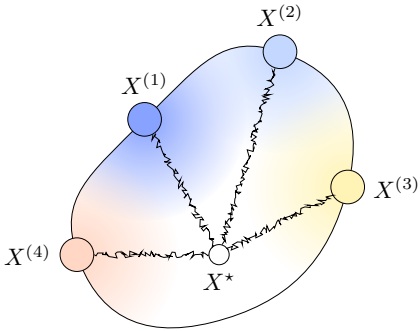
There are a few important practical notes regarding applying elliptic regularization.
We will first describe how to efficiently implement this loss and discuss a choice for the importance factor in~\eqref{eq:is} while maintaining the desired properties of the operator.
We then theoretically illustrate the main properties of interest and how they are relevant to the problems presented above:
The maximum principle which allows us to bound the error on the interior of the domain in Proposition~\ref{prop:boundloss}; and,
the qualitative behavior on the on the loss landscape in regression in cases of distribution shifts and class imbalance in Propositions~\ref{rmk:shift} and~\ref{rmk:interp}.

\subsection{Numerical Procedure}
\label{sec:bb}

In practice, it may be difficult to compute the first hitting time of the boundary within a reasonable amount of time.
For example, computing the paths up to the first hitting time requires sampling for an undefined amount of time. 
For a practical implementation, we consider a sampling method that approximates the behavior of the first hitting time but does not require sampling unconstrained sample paths.
We require that the approximation {\bf a)} runs in a finite amount of time and {\bf b)} maintains the important properties of the PDE.
To do this, we first compute the pairwise distances between all points within a batch according to some distance metric. 
Then, for each starting point $(x,y)$, we sample an endpoint according to a discrete distribution $P$ over endpoints with mass inversely proportional to the distances computed.
\Eqref{eq:pde_fk} would ordinarily be solved over all $(X,y) \in \mathcal{D}$ (e.g. by sampling uniformly over the space).
We instead consider sampling over paths that connect data points using the minimum distance.
\Figref{fig:fk_cartoon} illustrates an example of this where four points on the boundary are used to define the loss at $X^\star$.
The loss is minimized at all points along the paths between the source point $X^\star$ and the target points $X^{(i)}, i=1\ldots4$.
This is done by sampling Brownian bridges between the pairings and optimize the following loss: 
\begin{align}
\min_\theta \mathbb{E}_{(X_{\rightharpoondown},y_{\rightharpoondown}) \times (X_{\leftharpoonup}, y_\leftharpoonup) \sim P(\Pi)} \mathbb{E}_{X_s, y_s \sim  \mathrm{BB}_{X_\rightharpoondown,y_\rightharpoondown}^{X_\leftharpoonup,y_\leftharpoonup}}\left[ \int_0^1 \ell ( f_\theta(X_s), y_s) \mathrm{d}s \right]
\label{eq:BBtrain} 
\end{align}
for all $s$, where we denote $\mathrm{BB}_{X_\rightharpoondown,y_\rightharpoondown}^{X_\leftharpoonup,y_\leftharpoonup}$ as a Brownian bridge sample path where $\rightharpoondown$ denotes starting points and $\leftharpoonup$ denotes end points and $\Pi$ is the set of points in the support of $\delta_{(\mathcal{X}  \times \mathcal{Y} )} \times \delta_{(\mathcal{X}  \times \mathcal{Y} )}$.
This expectation involves solving the problem~\eqref{eq:pde_fk} for all points along the Brownian bridge between $X_\rightharpoondown,y_\rightharpoondown$ and $X_\leftharpoonup,y_\leftharpoonup$.
This minimizes the loss over all paths between the data points where the paths satisfy the distribution corresponding to the PDE between the endpoints. This loss, through an application of Dynkin's formula, corresponds to exactly solving~\eqref{eq:pde_fk} for points along the Brownian bridge paths, which we formally describe in Lemma~\ref{lem:approx} in Appendix~\ref{app:proofs}.
Since the support of the bridges includes the domain, all points within the domain should eventually be sampled. 
Additionally, the starting point of the bridge can be an arbitrary point within the domain; we use the data points for convenience, but the theoretical properties still hold with this sampling technique.
The connection to the transformation based regularization described in~\eqref{eq:tform} is made more explicit -- $\mathcal{T}$ in~\eqref{eq:tform} can be seen as the Brownian bridge transformation on the data points.
Note that there exist some numerical error with this approximation, since it requires discretizing a continuous process, we refer to~\citet[Chapter 7.2]{graham2013stochastic} for additional details.
We describe additional implementations of the bridge regularizer and how they relate to the original elliptic operator in Appendix~\ref{app:bridge_interp}.

\subsection{Bounding the Loss}

One property we would like to guarantee on our loss is, for any new point within $\mathcal{D}$, what should we expect the loss to be?
For example, under distribution shifts we may want to predict what the expected loss is. 
We do this through characterizing the loss landscape $u$ as a function of the observed data.
Since we impose that $u$ must satisfy a PDE, we can use the properties of the solution of the PDE to bound the solution in the interior of the domain. 
In particular, since the PDE is elliptic, the maximum principle is satisfied. 
This allows us to bound the interior of the domain by the values on the boundary (which are our training points).
We formalize this in the following proposition:
\begin{proposition}[Bounding the Loss for an Interior Point]
\label{prop:boundloss}
Consider any point $X,y \in \mathcal{D}$ and suppose the function pairs $u, f_\theta$ solves~\eqref{eq:pde}.
Then, the expected loss $u$ at $X,y$ satisfies the following inequality:
$$
\min_{X,y \in \mathcal{X} \times \mathcal{Y}} \ell(f_\theta(X),y) \leq u(X,y) \leq \max_{X, y \in \mathcal{X} \times \mathcal{Y}} \ell(f_\theta(X),y).
$$
\end{proposition}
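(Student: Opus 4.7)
The plan is to invoke the stochastic representation of $u$ given by the Feynman--Kac formula in \eqref{eq:pde_fk}, which converts the PDE-theoretic maximum principle into a one-line argument about pointwise bounds that survive taking expectations. This is cleaner than the classical weak maximum principle argument because the paper has already established the required probabilistic representation.

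First, I would recall that under the hypotheses, $u$ admits the representation
$$
u(X,y) \;=\; \mathbb{E}\bigl[\ell(f_\theta(x_\tau), y_\tau) \,\big|\, x_0 = X,\, y_0 = y \bigr],
$$
where $(x_t,y_t)$ is the Brownian diffusion of Section 4.1 started at $(X,y)$ and $\tau$ is the first time the path reaches the effective boundary where the boundary data \eqref{eq:bc} is prescribed. By construction the stopped point $(x_\tau, y_\tau)$ lies in (a neighbourhood of) $\mathcal{X} \times \mathcal{Y}$, so at that point $u$ coincides with $\ell(f_\theta(\cdot),\cdot)$ evaluated at a training sample.

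Next, I would use the almost-sure pointwise sandwich
$$
\min_{X',y' \in \mathcal{X} \times \mathcal{Y}} \ell(f_\theta(X'),y') \;\leq\; \ell(f_\theta(x_\tau), y_\tau) \;\leq\; \max_{X',y' \in \mathcal{X} \times \mathcal{Y}} \ell(f_\theta(X'),y'),
$$
which holds trivially because the left- and right-hand sides are the extrema of the quantity on the right/left over the set in which $(x_\tau,y_\tau)$ lives. Because the min and max are deterministic constants (with respect to the path measure), monotonicity of expectation immediately gives the claimed two-sided inequality for $u(X,y)$.

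The main obstacle, as I see it, is purely the technical question of what the ``boundary'' is: the statement writes $\partial \mathcal{D}$ with $\mathcal{D} = \mathcal{C}_{\mathcal{X}\times\mathcal{Y}}$, while the Feynman--Kac identity is actually realized with the $\varepsilon$-ball enlargement $(\mathcal{X}\times\mathcal{Y})_\varepsilon$ so that $\tau^{(\varepsilon)} < \infty$ almost surely. One needs to fix the convention that the boundary values on $\partial \mathcal{C}_{\mathcal{X}\times \mathcal{Y}}$ are inherited from (or bounded by) the training losses, at which point the bound is clean. An equivalent PDE-only route, worth mentioning for completeness, is to apply the standard weak maximum principle for the Laplacian on a bounded domain directly to \eqref{eq:pde}--\eqref{eq:bc}: a harmonic function attains its extrema on the boundary, and since the boundary values are exactly $\ell(f_\theta(X),y)$ at training points, the same inequality follows without invoking any stochastic machinery.
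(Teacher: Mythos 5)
Your proof is correct, but it takes the probabilistic route where the paper takes the analytic one. The paper's own proof is a two-sentence appeal to the classical weak maximum and minimum principles for elliptic operators: $u$ is harmonic on the interior, so its extrema are attained on the boundary, where $u$ equals the training losses by \eqref{eq:bc}. You instead prove the same sandwich via the Feynman--Kac representation \eqref{eq:pde_fk}: the hitting value $\ell(f_\theta(x_\tau),y_\tau)$ is almost surely pinned between the min and max of the boundary data, and monotonicity of expectation transfers the bound to $u$. These are of course two standard proofs of the same fact (the stochastic representation \emph{is} a proof of the maximum principle), and you even name the paper's route as the ``equivalent PDE-only'' alternative in your closing remarks. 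What your version buys is self-containment within the paper's own machinery --- the representation \eqref{eq:pde_fk} is already set up, so no external PDE theory is cited --- and it makes explicit where the boundary data enters. What the paper's version buys is brevity and independence from the well-posedness of the stopping time. Notably, you are more careful than the paper on the one genuine technical wrinkle: the proposition's extrema range over the training set $\mathcal{X}\times\mathcal{Y}$, whereas the stopped process can also exit through $\partial\mathcal{C}_{\mathcal{X}\times\mathcal{Y}}$, which need not consist of training points; the paper's proof silently identifies ``the boundary of the domain'' with ``the points in the training set,'' while you correctly flag that a convention is needed for the boundary values on $\partial\mathcal{C}_{\mathcal{X}\times\mathcal{Y}}$ (and for the $\varepsilon$-ball enlargement that makes $\tau$ finite). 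With that convention fixed, your argument is complete.
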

\begin{proof}
\vspace{-\intextsep}
    The proof follows from the fact that $u$ satisfies an elliptic PDE that satisfies both the maximum and minimum principle. Therefore, all extreme points lie on the boundary of the domain which correspond to the points in the training set. 
\end{proof}
\vspace{-0.5\intextsep}
This has an important implication insofar as we can anticipate the behavior of our loss landscape $u$ for new, unseen points by guaranteeing the loss is bounded by the training set points. 
This condition holds for any point within the domain $\mathcal{D}$. 

\subsection{Effects on Distribution Shifts and Data Imbalance}
\label{sec:diff}
Finally, we discuss how to interpret the elliptic operator for the problems of interest involving distribution shift and class imbalance which motivate this work. 
We focus on describing the behavior for the regression case where the loss is given by the $\ell^2$ distance and the function $f$ is a two-layer $\mathrm{ReLU}$ neural network and study the loss landscape $u(X,y)$ satisfying the following conditions:
\begin{align}
    \label{eq:simple_landscape}
     \sigma \nabla^2 u(X,y) &= 0, &\quad \quad X, y\in \phantom{\partial} \mathcal{D} \\
   \nonumber u(X,y) &= (f(X) - y)^2, &\quad \quad X, y\in \partial \mathcal{D} 
\end{align}
with $f(X) = W_1\mathrm{ReLU}(W_0 X)$ and where $W_0 \in \mathbb{R}^{K \times d}$ and $W_1 \in \mathbb{R}^{K \times 1}$.

\paragraph{Applied to data shifts}
Suppose we define a set of transformations $T_\varphi : \mathcal{X} \to \mathcal{X}$ parameterized by $\varphi \subset \Phi$ for some space of parameters $\Phi$ and require that these transformations only affect the input features $X$ but do not affect the target variable $y$.
As examples, we can think of medical imaging data collected at different hospitals under different operating conditions but of the same disease and of the same modality with the target variable being the class of disease in the image.
Certain parameter values of $\varphi$ may be sampled more in the dataset than others (e.g. data from larger hospitals) leading to greater uncertainty on samples from the underrepresented parameter values.
We will denote an example subset with few sample points ${\Phi}_{\wedge}$. 
As such, the subset of $\mathcal{X}$ associated with $T_{\varphi}(X), \varphi \in \Phi_\wedge$ will have small cardinality. 
Then, the expected hitting time of a point within the space is longer since fewer points are within the vicinity of a point within this subset. 

In the following proposition, we formalize this idea and study the changes in the loss landscape when an affine transformation is applied to the features:
\begin{proposition}[Expected Error Under Affine Transformations]
Consider the loss landscape $u$ satisfying~\eqref{eq:simple_landscape} and consider the class of affine transformations $\mathcal{T}$ where $ T(X) = A_TX + b_T \in \mathcal{T}$ for $A_T \in \mathbb{R}^{d \times d}$ and $b_T \in \mathbb{R}^{d}$ and $(X,y) \in \mathcal{D} \neq \mathcal{X} \times \mathcal{Y}$.
Suppose the true error is given by $(f(X) - y)^2$ and $f$ is $C$-Lipschitz.
Then, the loss landscape $u$ satisfies $$u(T(X), y)\leq 2 W_1 W_0(W_1W_0)^\top  \tau_T +C|\Delta|(|\Delta| + 2\varepsilon),$$ where $\varepsilon := | f(X) - y |$,  $\Delta := A_T X + b_T - X$, and $\tau_T = \inf_{t > 0} \{X_t, y_t \in \partial \mathcal{D} \mid X_0 =A_TX + b_T, y_0=y\} $.
\label{rmk:shift}
\end{proposition}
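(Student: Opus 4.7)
The plan is to invoke the Feynman--Kac representation in~\eqref{eq:pde_fk} to write
$$u(T(X),y) \;=\; \mathbb{E}\bigl[(f(X_\tau)-y_\tau)^2 \bigm| X_0=T(X),\, y_0=y\bigr],$$
where $(X_t,y_t)$ is Brownian motion with variance $\sigma$ starting at $(T(X),y)$ and $\tau=\tau_T$ is its first exit time from $\mathcal{D}$. The strategy is then to split this expectation into a diffusion contribution, controlled through It\^o's isometry against the squared Jacobian of $f$, and a deterministic displacement contribution, controlled through Lipschitzness of $f$ and the size of $\Delta$.

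The key algebraic step is the additive split
$$f(X_\tau)-y_\tau \;=\; \underbrace{[f(X_\tau)-f(T(X))] - [y_\tau-y]}_{D}\;+\;\underbrace{[f(T(X))-y]}_{S},$$
so that $(f(X_\tau)-y_\tau)^2 \leq 2D^2 + 2S^2$ by the elementary inequality $(a+b)^2\leq 2a^2+2b^2$. For $\mathbb{E}[D^2]$, I would exploit the two-layer ReLU structure: on every linear region, the gradient equals $W_1 W_0$ restricted to the active neurons, so $\lVert\nabla f\rVert^2 \leq W_1 W_0 (W_1 W_0)^\top$ pointwise. Applying It\^o's formula to the martingale $f(X_t)-f(T(X))-(y_t-y)$ and using optional stopping at $\tau_T$ then yields $\mathbb{E}[D^2]\leq W_1 W_0 (W_1 W_0)^\top \,\tau_T$ (with the diffusion coefficient $\sigma$ folded into $\tau_T$), which combined with the factor $2$ above produces the first term of the claimed bound.

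For the displacement term $S^2$, I would apply the triangle inequality together with $C$-Lipschitzness to write $|S|\leq |f(T(X))-f(X)|+|f(X)-y|\leq C|\Delta|+\varepsilon$, and then use the difference-of-squares identity
$$S^2 - \varepsilon^2 \;=\; \bigl(f(T(X))-f(X)\bigr)\bigl((f(T(X))-y)+(f(X)-y)\bigr) \;\leq\; C|\Delta|\bigl(|\Delta|+2\varepsilon\bigr),$$
matching the second term in the claimed bound after the baseline $\varepsilon^2$ (the loss already incurred at the untransformed input) is absorbed into the reference value $u(X,y)$.

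The main obstacle is the piecewise-linear nonsmoothness of the ReLU network: the pointwise gradient bound $\lVert\nabla f\rVert^2 \leq W_1 W_0(W_1 W_0)^\top$ must be valid across activation-region crossings of the Brownian path, and one has to verify that the cross term $\mathbb{E}[DS]$ indeed vanishes so that the clean $2D^2+2S^2$ split is not wasteful. I would handle the first issue either by invoking the Meyer--It\^o formula, which extends It\^o's formula to piecewise-$C^2$ functions of a semimartingale and produces no extra local-time contribution when $f$ is continuous across kinks, or via a region-by-region Dynkin argument together with the fact that Brownian motion spends Lebesgue-measure-zero time on the activation hyperplanes; the second issue follows because $D$ is a martingale increment with zero conditional mean while $S$ is deterministic given the initial condition.
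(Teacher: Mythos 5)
Your proposal is correct and follows essentially the same route as the paper: your martingale decomposition plus It\^o isometry is just Dynkin's formula for $(f(X_t)-y_t)^2$ unpacked, your gradient bound $\|\nabla f\|^2 \le W_1W_0(W_1W_0)^\top$ plays the role of the paper's bound on $\nabla^2(f(X_s)-y)^2$, and your difference-of-squares-plus-Lipschitz treatment of the displacement term is identical to the paper's chain of inequalities. The leftover baseline $\varepsilon^2$ and the factor-of-two bookkeeping are handled with the same looseness as in the paper's own proof (which likewise really bounds the \emph{difference} of landscape values rather than $u(T(X),y)$ itself), so nothing in your argument constitutes a new gap.
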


Proposition~\ref{rmk:shift} allows us to anticipate the behavior of our loss under affine distribution shifts of our data.
By applying this regularization, we can guarantee that the expected loss is no greater than a function of the parameters of the neural network used for regression.

\paragraph{Applied to data imbalance}
With regards to imbalanced data in the classification setting, we consider a local structure on the points within each class. 
We will partition $\mathcal{Y}$ into two classes, $\mathcal{Y}_\vee$ and $\mathcal{Y}_\wedge$ for the over-represented and under-represented classes respectively.
We can show that when the loss landscape satisfying the operator in~\eqref{eq:simple_landscape}, the expected loss is greater for regions of low data density versus for regions of high data density. 

Following a similar argument as in Proposition~\ref{rmk:shift}, we note that the hitting times for $\mathcal{Y}_\wedge$ is greater than the ones in $\mathcal{Y}_\vee$ with certain probability. 
The proposition is based on a probabilistic interpretation of how many components of the weight matrices in the neural network are greater than or equal to zero. 

\begin{proposition}[Expected Error in Regions of Low Density]
    Consider the loss landscape $u(X,y)$ satisfying~\eqref{eq:simple_landscape}.
    Let $q(\epsilon) := P(\nabla^2(f(X) - y)^2 \geq \, 2\epsilon W_1W_0(W_1W_0)^\top\,)$ be a continuous function, then an $\epsilon^\star \in (0,1)$ such that $q(\epsilon^\star) = 1-\epsilon^\star$ exists. Choose the smallest $\epsilon^\star$.
    Let $(X_\vee,y_\vee)$ and $(X_\wedge, y_\wedge)$ be two points where $y_\wedge$ is in a class that is well represented such that for $\tau_{y_\vee} = \inf_{t > 0} \{(X_t, y_t )\in \mathcal{D} \mid X_0 = X_\vee, y_0 = y_\vee \}$ and $\tau_{y_\wedge}= \inf_{t > 0} \{(X_t, y_t) \in \mathcal{D} \mid X_0 = X_\wedge, y_0 = y_\wedge\}$ the relation $\tau_\wedge \geq \frac{\tau_{y_\vee}}{\epsilon^\star}$ holds. Suppose also that $(f(X_\vee)- y_\vee)^2 = (f(X_\wedge)  - y_\wedge)^2.$
    Then, $u(X_\vee,y_\vee) \leq u(X_\wedge, y_\wedge)$ with probability $1-\epsilon^\star$. 
    \label{rmk:interp}
\end{proposition}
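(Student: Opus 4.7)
The plan is to combine the Feynman-Kac representation of $u$ with an It\^o/Dynkin expansion of the boundary data $g(X,y) := (f(X)-y)^2$ along the Brownian sample paths, and then use the probabilistic lower bound on $\nabla^2 g$ provided by $q$ together with the hypothesized scaling between hitting times.

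First, I would invoke the Feynman-Kac representation~\eqref{eq:pde_fk} to write $u(X,y) = \mathbb{E}[g(X_\tau, y_\tau) \mid (X_0,y_0)=(X,y)]$, where $(X_t,y_t) = (X,y) + \sigma W_t$ and $\tau$ is the first hitting time of the boundary. Applying a generalized It\^o formula to $g$ (valid a.e.\ since ReLU makes $f$ piecewise linear and the Brownian path spends zero Lebesgue time on the measure-zero kink set) yields
\[
u(X,y) \;=\; g(X,y) + \tfrac{\sigma^2}{2}\,\mathbb{E}\!\left[\int_0^{\tau} \nabla^2 g(X_s,y_s)\,ds\right].
\]
Because $g(X_\vee,y_\vee) = g(X_\wedge,y_\wedge)$ by assumption, the target inequality $u(X_\vee,y_\vee) \leq u(X_\wedge,y_\wedge)$ reduces to comparing the two expected integrals of $\nabla^2 g$ taken up to $\tau_{y_\vee}$ and $\tau_{y_\wedge}$ respectively.

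Next, I would compute $\nabla^2 g(X,y) = 2\,\|\nabla_X f(X)\|^2 + 2$, using $\nabla^2_X f \equiv 0$ a.e.\ for ReLU networks, and note that $\|\nabla_X f(X)\|^2 = W_1 D(X) W_0 W_0^{\!\top} D(X) W_1^{\!\top}$, where $D(X)$ is the diagonal $\{0,1\}$ matrix of active units. This gives an almost-sure upper bound $\nabla^2 g \leq 2 W_1 W_0(W_1 W_0)^{\!\top} + 2$ (attained when all units are active), while the defining identity $q(\epsilon^\star) = 1-\epsilon^\star$ supplies the pointwise lower bound $\nabla^2 g(X,y) \geq 2\epsilon^\star W_1 W_0(W_1 W_0)^{\!\top}$ on an event of probability at least $1-\epsilon^\star$.

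On that good event, I would bound the $\vee$-integral from above by $\bigl(2W_1 W_0(W_1W_0)^{\!\top}+2\bigr)\,\mathbb{E}[\tau_{y_\vee}]$ and the $\wedge$-integral from below by $2\epsilon^\star W_1 W_0(W_1 W_0)^{\!\top}\,\mathbb{E}[\tau_{y_\wedge}]$. The hypothesized relation $\tau_{y_\wedge} \geq \tau_{y_\vee}/\epsilon^\star$ exactly cancels the factor $\epsilon^\star$ on the leading term, giving $u(X_\vee,y_\vee) \leq u(X_\wedge,y_\wedge)$ on an event of probability $1-\epsilon^\star$, as claimed. The main obstacle, in my view, is the transfer of the probabilistic lower bound from the pointwise event $\{\nabla^2 g \geq 2\epsilon^\star W_1 W_0(W_1 W_0)^{\!\top}\}$ (an event in the law of a single input) to an event along the entire sample path of the diffusion that multiplies a \emph{random} hitting time: a clean treatment likely needs a Fubini exchange together with a stationarity/mixing assumption on the induced distribution of $(X_s,y_s)$, or an explicit conditioning on the activation pattern $D(X_s)$ along the path. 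The residual ``$+2$'' from the $y$-derivative and the activation-pattern dependence of $D$ along the path must also be absorbed into the slack provided by $1/\epsilon^\star$.
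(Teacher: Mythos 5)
Your proposal follows essentially the same route as the paper's proof: Dynkin's formula applied to $g=(f(X)-y)^2$, the two-sided bound $2\epsilon^\star W_1W_0(W_1W_0)^\top \le \nabla^2 g \le 2W_1W_0(W_1W_0)^\top$ with the lower bound holding with probability $1-\epsilon^\star$ and applied to the $\wedge$-integral, and the hypothesized ratio $\tau_{y_\wedge}\ge\tau_{y_\vee}/\epsilon^\star$ to cancel the $\epsilon^\star$. The two caveats you flag --- the residual $+2$ from $\partial_y^2 g$ and the transfer of the pointwise probability statement to an event along the whole sample path multiplying a random hitting time --- are genuine loose ends, but the paper's own proof simply ignores both (its upper bound on $\nabla^2 g$ drops the $+2$ outright), so your treatment is if anything more careful than the reference argument.
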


Proposition~\ref{rmk:interp} states that within regions of low data density, the loss landscape to be greater than in regions of high density, which is important in cases involving data imbalance.

\section{Experiments}
\begin{table*}
\begin{minipage}{0.46\textwidth}
\footnotesize
    \caption{Classification error on CIFAR-10, CIFAR-100, best method is \textbf{bolded} while the second best is \underline{underlined}}
    \begin{tabular}{@{}llll@{}}
    \toprule
                                     &          & CIFAR-10    & CIFAR-100   \\ \midrule
                                     
    \multirow{4}{*}{PreActRN50}  &  ERM              &  4.71$_{\pm0.06}$ & 24.68$_{\pm0.4}$ \\
                                     &  mixup          &  4.53$_{\pm0.04}$ & 23.03$_{\pm0.5}$ \\
                                     &  mixupE           &  \textbf{3.53$_{\pm0.05}$} & \underline{20.23}$_{\pm0.5}$ \\
                                     &  Elliptic          &  \underline{4.15}$_{\pm0.43}$ & \textbf{18.95$_{\pm0.3}$} \\ \midrule
    \multirow{4}{*}{PreActRN101} & ERM               &  4.21$_{\pm0.07}$ & 23.20$_{\pm0.4}$ \\ 
                                     & mixup           &  4.43$_{\pm0.05}$ & 23.05$_{\pm0.4}$ \\
                                     & mixupE            &  \textbf{3.35$_{\pm0.05}$} & \textbf{18.86$_{\pm0.4}$} \\
                                     & Elliptic           &  \underline{3.56}$_{\pm0.2}$ & \underline{19.03}$_{\pm0.6}$ \\ \midrule
    \multirow{4}{*}{Wide-RN28}   & ERM               &  4.24$_{\pm0.1}$ & 22.20$_{\pm0.1}$ \\
                                     & mixup           &  \underline{3.03}$_{\pm0.09}$ & 19.38$_{\pm0.1}$ \\
                                     & mixupE            &  \textbf{2.94$_{\pm0.05}$} & \textbf{17.12$_{\pm0.1}$} \\
                                     & Elliptic           &  3.09$_{\pm0.1}$ & \underline{17.71}$_{\pm0.1}$ \\\bottomrule
    \end{tabular}
    \label{tab:balance_classification}
\end{minipage} \hfill
\begin{minipage}{0.46\textwidth}
\footnotesize
    \caption{Classification accuracy on Tiny-Imagenet 200, best method is \textbf{bolded} while the second best is \underline{underlined}}
    \begin{tabular}{@{}llll@{}}
    \toprule
                                     
                                     &                   & Top-1 (\%) & Top-5(\%) \\ \midrule
    \multirow{4}{*}{PreActRN18}  &  ERM              & 54.97$_{\pm0.5}$ & 72.71$_{\pm0.5}$\\ 
                                     &  mixup    & 54.65$_{\pm0.4}$ & 72.53$_{\pm0.5}$\\ 
                                     &  mixupE           & \underline{62.21}$_{\pm0.4}$ & \underline{82.09}$_{\pm0.4}$\\
                                     &  Elliptic     & \textbf{65.03$_{\pm0.1}$} & \textbf{84.98$_{\pm0.1}$}\\ \midrule
    \multirow{4}{*}{PreActRN34}  & ERM               & 57.25$_{\pm0.5}$ & 72.58$_{\pm0.5}$\\ 
                                     & mixup     & 57.79$_{\pm0.4}$ & 76.15$_{\pm0.4}$\\
                                     & mixupE            & \underline{65.37}$_{\pm0.3}$ & \underline{83.77}$_{\pm0.4}$\\
                                     & Elliptic      & \textbf{66.67$_{\pm1.7}$}& \textbf{85.71$_{\pm1.2}$} \\ \midrule
    \multirow{4}{*}{PreActRN50}  & ERM              & 55.91$_{\pm0.6}$ & 73.50$_{\pm0.6}$\\
                                     & mixup     & 54.86$_{\pm0.5}$ & 73.11$_{\pm0.4}$\\
                                     & mixupE            & \underline{67.22}$_{\pm0.4}$ & \underline{85.14}$_{\pm0.4}$\\
                                     & Elliptic      & \textbf{70.28$_{\pm0.1}$} & \textbf{88.14$_{\pm0.2}$}\\\bottomrule
    \end{tabular}
    \label{tab:tinyimagenet200}
\end{minipage}
\vspace{-\intextsep}
\end{table*}
We present empirical results over a comprehensive set of tasks related to the proposed regularization scheme.
We first empirically evaluate the bound derived from Proposition~\ref{prop:boundloss} and verify that the proposed regularization retains the necessary loss within the domain of interest.
Next, we benchmark the elliptic regularization against another popular regularization scheme known as mixup~\citep{zhang2017mixup} and its variants on balanced classification and in-distribution regression. 
We then investigate the benefits of elliptic training on classification and regression with imbalanced group populations, imbalanced domains, and subdomain shifts. 
Finally, we experiment with elliptic training on imbalanced classification. 
Additional comparisons and ablation studies are provided in Appendix~\ref{app:addexp} with detailed data descriptions for all datasets in Appendix~\ref{app:datasets}. 
All hyperparameter settings are described in Appendix~\ref{app:hyper}. 
We note that although the implementation described in Section~\ref{sec:bb} specifies computing pairwise distances between data, we empirically show in Appendix~\ref{app:addexp} that this distance computation has a minor effect on performance of the method.

\subsection{Empirical Study of Proposition \ref{prop:boundloss}}
Using the two moons dataset as an example, we empirically illustrate that optimizing the neural network by solving the PDE through~\eqref{eq:BBtrain} enforces the bound in Proposition~\ref{prop:boundloss}. 
\begin{wrapfigure}{r}{0.55\textwidth}
    \centering
    \includegraphics[width=0.55\textwidth]{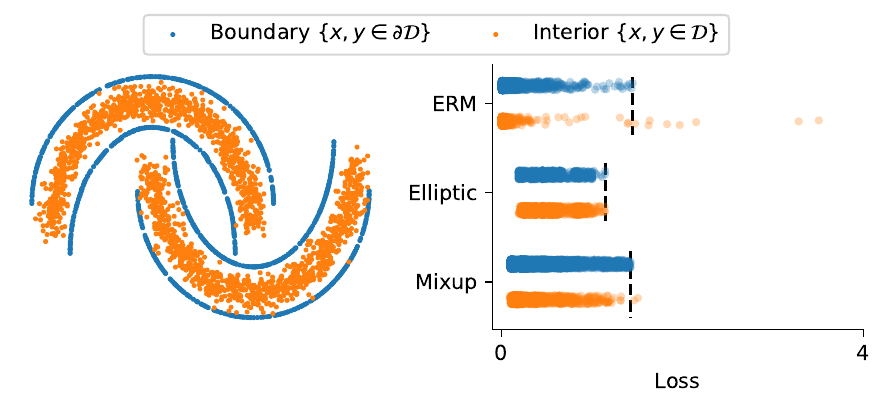}
    \caption{Comparison of loss for data on boundary vs within the boundary of the data space. The dashed line indicates the max loss of the boundary data.}
    \label{fig:boundloss}
    \vspace{-10pt}
\end{wrapfigure}
We first train a $2$-layer, $4$-hidden unit multi-layer perceptron using the different optimization criteria of ERM, elliptic, and mixup with the generated boundary data (blue in~\Figref{fig:boundloss}).
We then compute the loss on the within-boundary data (orange in Figure~\ref{fig:boundloss}).
We sampled $10000$ points on the boundary to train ERM and mixup while using $1000$ boundary data with $10$ discretized timesteps for the elliptic loss\footnote{Two endpoints and $8$ diffusion steps.} to sample Brownian bridges. 
We show in Figure~\ref{fig:boundloss} that only the neural network trained using~\eqref{eq:BBtrain} maintains the interior loss within the bounds of the boundary loss as specified in Proposition~\ref{prop:boundloss}. 
\subsection{Experiments on Regression and Classification}
We now consider how the proposed elliptic regularization performs on standard regression and classification tasks in machine learning. 
While we are interested in understanding the regimes in which the regularization scheme improves the performance for the cases of group imbalance and distribution shift, we are also interested in seeing how the performance is in ``normal'' regimes without these issues. 
To do this, we evaluate elliptic training against suitable baselines of vanilla mixup~\citep{zhang2017mixup} and related state-of-the-art mixup algorithms for classification~\citep{zou2023mixupe} and regression~\citep{yao2022c}.

We then consider the performance when applying the Brownian bridge algorithm described in Section~\ref{sec:bb} without additional importance weighting.
We showcase that using the elliptic regularization achieves comparable results to the state-of-the-art mixup algorithms for these balanced, in-distribution experiments.
We benchmark on CIFAR-10 and CIFAR-100 with classification error rate in Table \ref{tab:balance_classification}; we also evaluate on tiny-Imagenet 200 and report the top-1 and top-5 accuracy in Table \ref{tab:tinyimagenet200}. 
A number of different architectures are used in this experiment including PreActResNet of various depths~\citep{he2016deep} and Wide-Resnet-28-10~\citep{zagoruyko2016wide}.
The experiment was repeated with 5 random seeds. 
The results suggest that elliptic training generally improves performance over the vanilla mixup algorithm and improves upon mixupE on the more complicated Tiny-Imagenet 200. 
The results also suggest that the elliptic regularization may be most helpful in datasets with a larger number of classes.
This may be because the interpolation between classes will more likely lie on the interior of the simplex rather than on discrete corners, as is the case for datasets with fewer classes.
\begin{wraptable}{r}{0.53\textwidth}
\footnotesize
    \caption{Average Regression (RMSE) on Regression}
    \begin{tabular}{@{}lll@{}}
    \toprule
     Algorithm       & AirFoil & NO2 \\  \toprule
     C-mixup \citep{yao2022c}                & 2.88$_{\pm0.187}$ & 0.524$_{\pm0.006}$ \\
     Elliptic  & \textbf{2.85}$_{\pm0.174}$ & \textbf{0.523}$_{\pm0.008}$ \\ \bottomrule
    \end{tabular}
    \label{tab:indistregression}
\end{wraptable}
For regression, we benchmarked the elliptic regularization against c-mixup algorithm on two real world tabular datasets: Airfoil Self-Noise (Airfoil) \citep{Brooks2014air} and NO2 \citep{kooperberg1997statlib}; Airfoil contains aerodynamic features of airfoil blade and related acoustic statistics and NO2 studies the concentration of NO2 particles given traffic volume and meteorological variables. 
Average RMSE is reported in Table~\ref{tab:indistregression}, and the standard deviations are calculated over 10 repetitions.

\subsection{Robustness to Group Imbalance and Distribution Shift}

We now explicitly apply the elliptic regularization to the loss function to evaluate the performance on data with imbalanced subpopulations and under distribution shifts. 
We continue to evaluate on both classification and regression tasks. 
For the robust classification task, we benchmark against two types of algorithms: 1) single-stage training algorithm including Focal Loss~\citep{lin2017focal}, CVaR-DRO and $\chi^2$-DRO~\citep{levy2020large}, CVaR-DORO and $\chi^2$-DORO~\citep{zhai2021doro}; and 2) two-stage training algorithm including JTT~\citep{liu2021just} and UMIX~\citep{han2022umix}.
These are group-oblivious algorithms that train without subpopulation/domain information; a full comparison with group-informed algorithms on classification tasks is provided in appendix~\ref{app:addcomp}. 
We benchmark elliptic training with importance weighting drift $b = \nabla \ell(f(x),y)$ in~\eqref{eq:linear_pde} on WaterBirds~\citep{Koh2021WILDS} and CelebA~\citep{sagawa2019distributionally} to evaluate performance on robustness for group imbalance. 
We also evaluate on the Camelyon17~\citep{bandi2018detection} dataset to examine robustness under domain shifts. 
To remain consistent with previous literature, we report average and worst group accuracy for Waterbirds and CelebA across 3 random seeds and report 10 seed average accuracy for Camelyon17. 
The classification results are presented in Table~\ref{tab:uncertaintybridge}. 
The elliptic regularization greatly improves upon the one-stage algorithms and is comparable to the two-stage algorithm Just-Train-Twice~\citep{liu2021just}, which requires two separate training sessions of the model.
\begin{table}[h]
\footnotesize
\centering
    \caption{Robust classification accuracy under imbalance subpopulations and domain shift; best methods are \textbf{bolded} and best one-stage methods are \underline{underlined}.}
    \begin{tabular}{@{}llllll@{}}
    \toprule
     Algorithm       & \multicolumn{2}{c}{WaterBirds} & \multicolumn{2}{c}{CelebA} & Camelyon17\\  \toprule
                     & Avg(\%) & Worst(\%) & Avg(\%) & Worst(\%) & Avg(\%)\\ \midrule
     Focal Loss \citep{lin2017focal}      & 87.0$_{\pm0.5}$ & 73.1$_{\pm1.0}$ & 88.4$_{\pm0.3}$ & 72.1$_{\pm3.8}$ & 68.1$_{\pm4.8}$\\ 
     CVaR-DRO   \citep{levy2020large}     &  90.3$_{\pm1.2}$& {77.2}$_{\pm2.2}$ & 86.8$_{\pm0.7}$ & 76.9$_{\pm3.1}$ & {70.5}$_{\pm5.1}$\\
     CVaR-DORO  \citep{zhai2021doro}      &  91.5$_{\pm0.7}$& 77.0$_{\pm2.8}$ & 89.6$_{\pm0.4}$ & 75.6$_{\pm4.2}$ & 67.3$_{\pm7.2}$\\
     $\chi^2$-DRO   \citep{levy2020large} & 88.3$_{\pm1.5}$ & 74.0$_{\pm1.8}$ & 87.7$_{\pm0.3}$ & \underline{78.4}$_{\pm3.4}$ & 68.0$_{\pm6.7}$\\ 
     $\chi^2$-DORO \citep{zhai2021doro}   &  89.5$_{\pm1.0}$& 76.0$_{\pm3.1}$ & 87.0$_{\pm0.6}$ & 75.6$_{\pm3.4}$ & 68.0$_{\pm7.5}$\\\midrule
     Elliptic + IW                       & \underline{92.0}$_{\pm{0.3}}$ & \underline{84.1}$_{\pm{1.1}}$ & \underline{\textbf{91.3}}$_{\pm0.3}$ & {77.4}$_{\pm4.5}$ & \underline{\textbf{77.9}}$_{\pm3.0}$\\ \midrule \midrule
     Two-stage: JTT \citep{liu2021just}              & \textbf{93.6}$_{\pm \text{NA}}$ & {86.0}$_{\pm\text{NA}}$ & 88.0$_{\pm\text{NA}}$ & 81.1$_{\pm\text{NA}}$ & 69.1$_{\pm6.4}$\\
     Two-stage: UMIX  \citep{han2022umix}            & 93.0 $_{\pm0.5}$& \textbf{90.0}$_{\pm1.1}$ & 90.1$_{\pm0.4}$ & \textbf{85.3}$_{\pm4.1}$ & 75.1$_{\pm5.9}$\\ \bottomrule
    \end{tabular}
    \label{tab:uncertaintybridge}
    \vspace{-\intextsep}
\end{table}
\begin{table}[h]
\footnotesize
\centering
    \caption{Regression (RMSE) performance under (sub)domain shifts datasets; best method is \textbf{bolded}. }
    \begin{tabular}{@{}lllllll@{}}
    \toprule
     Algorithm       & \multicolumn{2}{c}{SkillCraft} & \multicolumn{2}{c}{Crime} & \multicolumn{2}{c}{RCF-MNIST} \\  \toprule
                    & Avg & Worst & Avg & Worst & Avg & Worst\\ \midrule
     C-mixup \tiny{\citep{yao2022c} } & 6.27$_{\pm0.537}$ & \textbf{8.83}$_{\pm1.010}$ & 0.132$_{\pm0.003}$ & 0.167$_{\pm0.010}$ & 0.165$_{\pm0.001}$ & 0.180$_{\pm0.001}$\\
     Elliptic  & \textbf{5.97}$_{\pm0.283}$ & 9.17$_{\pm1.150}$ & 0.132$_{\pm0.003}$ & \textbf{0.164}$_{\pm0.01}$ & \textbf{0.162}$_{\pm0.002}$ & \textbf{0.178}$_{\pm0.002}$\\ \bottomrule
    \end{tabular}
    \label{tab:outdistregression}
\vspace{-\intextsep}
\end{table}

For the regression task, we compare against the c-mixup~\citep{yao2022c} algorithm.
We experiment on SkillCraft and Crime to examine domain shift and RCF-MNIST for sub-domain shift. Communities and Crime (Crime)~\citep{Redmond2009crime} and SkillCraft1 Master Table (SkillCraft)~\cite{Blair2013skill} are real-world tabular datasets where domain shifts exist between the training and testing data. Detailed description of these dataset is provided in Appendix~\ref{app:datasets}.
All experiments for regressions are repeated with 10 different random seeds; we report the average and worst RMSE in Table~\ref{tab:outdistregression}. 

\subsection{Class Imbalance and Robustness to Noise}
We further evaluate the efficacy of elliptic regularization from two perspectives: imbalance classification (average and worst class) and robustness to noise. 
We focus on the real-world medical dataset Med-MNIST~\citep{yang2023medmnist} where class imbalance exists due to the prevalence of different diseases in the patient populations.
We chose 4 sub-datasets of the Med-MNIST dataset to evaluate the effectiveness of the elliptic regularization. 
Detailed descriptions of these datasets are provided in Appendix~\ref{app:datasets}. 
To showcase the robustness induced by elliptic training, we include 50\% label noise to the training data where half of the training data labels are randomly shuffled. 
Average and worst-class accuracy over 10 seeds are presented in Table~\ref{tab:medMNIST}, where we note the significant improvement of elliptic training and further improvement brought forth by importance weighting. 
The results in Table~\ref{tab:medMNIST} indicate that elliptic training performs well in the class-imbalance tasks while still maintaining high performance under large label noise.
\begin{table*}[h]
    \centering
    \footnotesize
    \caption{Average and worst class classification accuracy for selected Med-MNIST dataset, best method is \textbf{bolded} while the second best is underlined}
    \begin{tabular}{@{}lllllllll@{}}
    \toprule
                      & \multicolumn{2}{c}{Breast} & \multicolumn{2}{c}{Blood}          & \multicolumn{2}{c}{Path}      & \multicolumn{2}{c}{OrganC}        \\ \midrule
        Method      & Avg(\%)       & Worst(\%)           & Avg(\%)         & Worst(\%)        & Avg(\%)        & Worst(\%) & Avg(\%)            & Worst(\%) \\ \midrule
        ERM           & 80.0$_{\pm3.4}$ & 32.4$_{\pm13.8}$ & 81.4$_{\pm2.7}$ & 56.3$_{\pm17.3}$ & 55.7$_{\pm3.8}$ & 10.7$_{\pm8.6}$ & 84.1$_{\pm1.9}$ & 65.1$_{\pm8.2}$ \\
        mixup         & 83.1$_{\pm2.4}$ & 47.1$_{\pm10.9}$ & 78.8$_{\pm3.1}$ & 41.7$_{\pm19.3}$ & 54.3$_{\pm2.1}$ & 1.2$_{\pm2.1}$ & 85.6$_{\pm2.4}$ & 56.8$_{\pm11.0}$ \\
        mixupE        & 76.5$_{\pm2.8}$ & 16.7$_{\pm11.3}$ & 74.2$_{\pm3.6}$ & 31.9$_{\pm16.9}$ & 52.5$_{\pm2.0}$ & 2.7$_{\pm7.6}$ &70.4$_{\pm7.6}$ & 35.6$_{\pm16.8}$ \\
        Elliptic      & \textbf{87.6}$_{\pm1.8}$ & \underline{67.6}$_{\pm4.4}$ & \textbf{85.5}$_{\pm1.6}$ & \underline{60.2}$_{\pm14.9}$ & \underline{62.3}$_{\pm2.5}$ & \textbf{26.4}$_{\pm10.2}$ & \underline{87.7}$_{\pm0.9}$ & \underline{65.9}$_{\pm3.4}$ \\
         + IW & \underline{87.3}$_{\pm0.9}$ & \textbf{67.6}$_{\pm3.0}$ & \underline{84.1}$_{\pm4.0}$ & \textbf{68.0}$_{\pm13.7}$ & \textbf{62.7}$_{\pm1.8}$ & \underline{20.9}$_{\pm6.8}$ & \textbf{88.0}$_{\pm0.7}$ & \textbf{66.2}$_{\pm4.8}$ \\ \bottomrule
    \end{tabular}
    \label{tab:medMNIST}
\vspace{-\intextsep}
\end{table*}

\section{Discussion}

In this work, we proposed a regularization technique for the loss landscape over a defined domain where we borrow ideas from PDEs to impose properties on the landscape.
We require that the loss landscape satisfies an elliptic operator and described computational tools for enforcing this.
The operator allows us to bound the expected loss on the interior of the domain using the points that we observe while also providing theoretical behavior for the loss in distribution shifts and class imbalance. 
There exist many avenues for extending the work provided. 
A further study on the properties of Radon-Nikodym derivative corresponding to the parameters of the first order derivatives of the PDE should be undertaken.
We chose the current parameters due to their ease in computation, but there may be more ways one can impose coefficients.
Related to this, an important theoretical investigation involves studying the coefficients of the operator from a stochastic control perspective to understand the behavior under, for example, stochastic gradient descent. 
One can easily show that the gradient with respect to parameters satisfies another elliptic PDE over the space of inputs using similar tools of analysis, but the implications on the training behavior of the parameters is still not clear. 
It could be helpful to link to other theories, e.g. flatness of minima, to understand the effects on the parameter space; or other types of PDEs, such as the parabolic PDEs, to provide another avenue of study on other learning regimes \citep{yang2025parabolic}.
Finally, extensions to data supported on manifolds could provide an interesting avenue for regularization for data on complex geometries.

\paragraph{Limitations}
There are a few limitations to the proposed framework.
Without using the Brownian bridge technique, solving the PDE may have infinite time until hitting the first point on the boundary.
It is therefore advisable to study what happens when including boundary conditions (e.g. reflecting boundaries) or imposing drift such that the diffusion is guaranteed to hit a point on the boundary. 
Additionally, the computation time is greater for sampling these stochastic processes than with ERM.

\section*{Acknowledgement}
Ali Hasan and Vahid Tarokh were supported in part by the Air Force Office of Scientific Research (AFOSR) under award number FA9550- 20-1-0397.  Haoming Yang was supported in part by Air Force Office of Scientific Research (AFOSR) under award number FA9550-22-1-0315

\bibliographystyle{plainnat}
\bibliography{refs}

\newpage

\section*{Elliptic Operators in Loss Landscapes (Supplementary Material)}

\appendix

\section{Algorithm Details}
\label{app:algorithm}
To fully supplement the algorithmic contributions in the main paper, we detail the elliptic training procedure that solves the PDE of~\eqref{eq:pde} using Brownian bridges in Algorithm~\ref{alg:bridge}. 
There are several hyperparameters for the training algorithm, such as the number of bridges $n_b$, the number of time discretization steps $n_t$, the bridge diffusion constant $\sigma_b$, and the strength of importance weighting $\xi$. 
We will present some ablation studies exploring the effect of hyperparameter $\xi$ and $\sigma_b$ in Section~\ref{app:ablation}. 
We also note that depending on different tasks, there could be constraints imposed on the Brownian bridges of $y$.
For example, for a classification task, $y \in [0,1]$, so we project the Brownian bridge of $y$ onto the simplex by taking the absolute value of the path and normalizing it.

\begin{algorithm}[h]
\caption{Elliptic training algorithm}
\label{alg:bridge}
\begin{algorithmic}
\STATE {Input: Data $X_{\text{all}},y_{\text{all}}$, related Brownian bridge hyperparameters.}
\STATE {Initialize: neural network $f_\theta$.}
\STATE Compute pairwise $l_2$-distance between all data in $X_\text{all}$ (optional).
\FOR{$X,y$ in mini-batched $X_{\text{all}},y_{\text{all}}$}
     \STATE Obtain Brownian bridge pairs $X', y'$ by sampling inversely proportional to distance computed
     \STATE Sample Brownian bridges $X_s, y_s \sim \mathrm{BB}_{X,y}^{X',y'}$ for arbitrary timestep $s$ (see Algorithm 2)
     \STATE Compute loss $\ell$ as \eqref{eq:BBtrain} with $\ell(f(X_s),y_s)$ using the Euler's method over $s \in [0,1]$.
     \IF{use importance weight} 
     \STATE Compute gradient $\nabla \ell$, then compute $\ell = \ell + \xi \nabla \ell$
     \ENDIF
     \STATE Minimize $\ell$ using gradient-based optimizer
\ENDFOR
\end{algorithmic}
\end{algorithm}

To sample a Brownian bridge, we used the Euler-Maruyama integration scheme as follows:

\begin{algorithm}[h]
\caption{Sampling a Brownian bridge}
\label{alg:bbridge}
\begin{algorithmic}
\STATE {\textbf{Input:} Initial condition $X \in \mathbb{R}^d$, terminal condition $X' \in \mathbb{R}^d$, diffusion coefficient $\sigma$, number of time steps $M$}
\STATE {Let $\Delta_t = \frac{T- t_0}{M}$ where $t_0 = 0, t_M = 1$, and $t_{i+1} - t_{i} = \Delta_t$}
\STATE {Sample $M$ increments from standard normal $N(0,I)$, denoted as $\Delta W_t$}
\STATE {Integrate increments to form Brownian motion $W_t = \sum_{s=0}^t \sigma \Delta W_s \sqrt{\Delta_t}$}
\STATE {Set $W_0= 0$; then compute $W_t^X = W_t + X$}
\STATE {\textbf{Output:} Brownian bridge as $\mathrm{BB}_t = W_t^{X} - t(W_M^X - X')$}
\end{algorithmic}
\end{algorithm}

\subsection{Computation Complexity}
The most computationally expensive part of our method is computing pairwise distance for every dataset, which is of time-complexity $\mathcal{O}(n^2)$. However, this can be pre-computed and saved as a dictionary to be called for $\mathcal{O}(1)$ speed during training. We also show, through empirical experiment, the computation of distance may not be necessary.

We also tested the computation time between UMIX and the proposed method on the waterbirds dataset. All hyperparameters of UMIX are taken from \cite{han2022umix}. Three run average training time with the proposed method is 8345 seconds while UMIX requires 11795 seconds (stage 1: 5918, stage 2: 5877). This result suggests that the additional computational cost associated with sampling bridges is marginal and the proposed elliptic regularization remains competitive in performance with a lower training time.

\section{Proofs}
\label{app:proofs}
\subsection{Proof of Proposition~\ref{rmk:shift}}

\begin{proof}
    We will assume $\tau < \tau_T$, where $\tau_T$ is the stopping time for the Brownian motion when starting at the transformed coordinates. 
    This is a natural assumption since the transformed data may be further from other points after transformation and result in a longer hitting time.
    From Dynkin's formula, we can write the solution in terms of the stopping time and the boundary condition as:
    \begin{align*}
    \mathbb{E}[(f(X) - y) \mid X_0 = T(X), y_0 = y] &= (f(T(X)) - y)^2 \\ &+ \mathbb{E}\left[\int_0^{\tau_T} \nabla^2 (f(X_s) - y)^2 \mathrm{d}s \mid X_0 = T(X), y_0 = y \right].
    \end{align*}
    Since $\nabla^2 (f(X_s) - y)^2 \leq  2 W_1 W_0(W_1W_0)^\top $, we rewrite the solution as
        $$
        \mathbb{E}[(f(X) - y) \mid X_0 = T(X), y_0 = y] \leq (f(T(X)) - y)^2 +2 W_1 W_0(W_1W_0)^\top  \tau_T.
        $$
        Compared with the original expected error given by 
        $$
        (f(X) - y)^2 \leq \mathbb{E}[(f(X) - y) \mid X_0 =X, y_0 = y] \leq (f(X) - y)^2 + 2 W_1 W_0(W_1W_0)^\top\tau.
        $$
        Taking the difference, we get
        \begin{align*}
        error &\leq 2 W_1 W_0(W_1W_0)^\top  \tau_T + (f(T(X)) - y)^2  - (f(X) - y)^2 \\
        &\leq 2 W_1 W_0(W_1W_0)^\top  \tau_T + (f(T(X)) - f(X))(f(T(X)) + f(X)- 2y)  \\ 
         &\leq 2 W_1 W_0(W_1W_0)^\top  \tau_T + |f(T(X)) - f(X)||f(T(X)) + f(X)- 2y|  \\
          &\leq 2 W_1 W_0(W_1W_0)^\top  \tau_T + C|A_TX - b_T - X| |f(A_TX + b_T) - f(X) + f(X) + f(X)- 2y|  \\ 
            &\leq 2 W_1 W_0(W_1W_0)^\top  \tau_T + C|A_TX - b_T - X| (|f(A_TX + b_T) - f(X) |+ |f(X) + f(X)- 2y| ) \\ 
            &\leq 2 W_1 W_0(W_1W_0)^\top  \tau_T + C|A_TX - b_T - X|( C|A_TX - b_T - X|+ |f(X) + f(X)- 2y|)  \\ 
                &\leq 2 W_1 W_0(W_1W_0)^\top  \tau_T +C|A_TX - b_T - X|(C|A_TX - b_T - X| + 2\varepsilon). 
        \end{align*}
\end{proof}

\subsection{Proof of Proposition~\ref{rmk:interp}}
\begin{proof}
    We follow a similar proof strategy as in Proposition~\ref{rmk:shift} where we use Dynkin's formula to compare the solution at different points. 
    The first assumption follows from placing a uniform distribution over the $\mathrm{ReLU}$ activation functions being nonzero.  
    Note that from the intermediate value theorem there exists $\epsilon^\star \in (0,1)$ such that $q(\epsilon^\star) = 1-\epsilon^\star$.
    From Dynkin's formula, we can compare the expected loss at the two points by
    $$
    u(X_\vee,y_\vee) = (f(X_\vee) - y_\vee)^2 + \mathbb{E}\left[\int_0^{\tau_{y_\vee}} \nabla^2 (f(X_s) - y_s)^2 \mathrm{d}s \mid X_0 = X_\vee, y_0 = y_\vee \right]
    $$
    and for the underrepresented class,
    $$
    u(X_\wedge,y_{\wedge}) = (f(X_\wedge) - y_\wedge)^2 + \mathbb{E} \left[\int_0^{\tau_{y_\wedge}} \nabla^2 (f(X_s) - y_s)^2 \mathrm{d}s \mid X_0 = X_\wedge, y_0 = y_\wedge \right ].
    $$
    Since $2\epsilon^\star W_1W_0 (W_1 W_0)^\top \leq \nabla^2(f(X_s) - y_s)^2 \leq 2 W_1W_0 (W_1 W_0)^\top $, we let the lower bound correspond to the integrand of the solution at $(X_\wedge,y_\wedge)$ and the upper bound correspond to the solution at $(X_\vee,y_\vee)$.
    This gives us with probability $1-\epsilon^\star$,
    \begin{equation}
    (f(X_\wedge) - y_\wedge)^2 + 2\epsilon^\star W_1 W_0(W_1 W_0)^\top \tau_{y_\wedge} \leq u(X_\wedge,y_\wedge) 
    \label{eq:lower_imb}
    \end{equation}
    and with probability 1: 
    \begin{equation}
    u(X_\vee,y_\vee) \leq  (f(X_\vee) - y_\vee)^2 + 2 W_1 W_0(W_1 W_0)^\top \tau_{y_\vee}.
    \label{eq:upper_imb}
    \end{equation}
    Setting $\tau_{y_\wedge} \geq \frac{\tau_{y_\vee}}{\epsilon^\star}$, the left hand side of~\eqref{eq:lower_imb} is greater than the right hand side of~\eqref{eq:upper_imb}. 
    This achieves the desired result. 
\end{proof}

\subsection{Proof of Approximation}
We now illustrate how the approximation relates to the expected loss landscape through the following lemma: 

\begin{lemma}[Approximation Error]
\label{lem:approx}
Let $\ell_{f_\theta}(X)$ represent the loss and $\bar{\ell}_{f_\theta}(X)$ represent the expected loss over the domain.
Then, under the Brownian bridge loss approximation, 
$$
\nabla^2 \bar{\ell}_{f_\theta}(X) - \nabla^2 \ell_{f_\theta}(X) = \mathbb{E}_{\mathrm{BB}}\left[ \int_0^\tau \ell_{f_\theta}(X_s) \mathrm{d} s \right].
$$ 
That is, the difference between expected loss and the real is given by the value of the Brownian bridge objective.
\end{lemma}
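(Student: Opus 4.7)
The plan is to apply Dynkin's formula to the diffusion process $(X_t)$ associated with the elliptic operator, and then relate the resulting integrated quantity to the Brownian bridge objective used in the approximation scheme. First I would fix an interior point $X$ and consider the Brownian motion $X_t$ with generator $\mathcal{L} = \tfrac{1}{2}\sigma^2 \nabla^2$, stopped at the first hitting time $\tau$ of the boundary set, so that the Dirichlet solution admits the stochastic representation $\bar{\ell}_{f_\theta}(X) = \mathbb{E}[\ell_{f_\theta}(X_\tau) \mid X_0 = X]$ identified with the ``expected loss over the domain'' in the lemma's terminology. By Dynkin's formula applied to $\ell_{f_\theta}$, this yields $\bar{\ell}_{f_\theta}(X) - \ell_{f_\theta}(X) = \mathbb{E}\bigl[\int_0^\tau \mathcal{L}\ell_{f_\theta}(X_s)\,ds \mid X_0 = X\bigr]$, which is the starting identity.

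Next I would relate the unconditional stopped-Brownian-motion expectation to the Brownian bridge expectation from \eqref{eq:BBtrain}. This amounts to conditioning on the exit point $X_\tau$ distributed according to the harmonic measure on the boundary, and writing sample paths as a mixture of Brownian bridges pinned to these endpoints. This reconciles the stochastic representation with the sampling procedure of Section~\ref{sec:bb}, where bridges are drawn between pairs of training points and the starting point is taken to be a data point.

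To obtain the exact form stated in the lemma, with $\nabla^2$ applied to both $\bar{\ell}_{f_\theta}$ and $\ell_{f_\theta}$ on the left and the plain loss inside the integral on the right, I would invoke the Green's-function representation of the stopped diffusion: the occupation-time measure of the killed Brownian motion started at $X$ is exactly the Green's kernel $G(X,\cdot)$ on $\mathcal{D}$, satisfying $\nabla^2 G(X,\cdot) = -\delta_X$. Applying $\nabla^2_X$ to the Dynkin identity and pushing the derivative through the expectation (justified by dominated convergence and the regularity of $G$ away from the diagonal) converts the Green's-function integral of a test function $g$ into $-g(X)$; reading the identity in reverse with $g = \mathcal{L}\ell_{f_\theta}$, the integrand collapses onto $\ell_{f_\theta}(X_s)$ along the bridge up to an absorbed constant in $\sigma$, giving exactly the stated identity.

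The main obstacle will be carefully tracking the placement of the Laplacian operators and scaling constants so that the Green's-function inversion reproduces the lemma as written. A naive application of Dynkin gives $\bar{\ell}_{f_\theta} - \ell_{f_\theta}$ on the left rather than the difference of Laplacians, so the key step is verifying that differentiating through the hitting-time expectation commutes with the Green's-function representation of the occupation measure, and that the integrand on the right is precisely $\ell_{f_\theta}(X_s)$ along the bridge (rather than its Laplacian). I would check this explicitly on the simple one-dimensional case first and then extend by direct computation in $d$ dimensions using the known Poisson-kernel form on the ball before specializing to the convex hull $\mathcal{C}_{\mathcal{X}\times\mathcal{Y}}$.
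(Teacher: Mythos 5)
Your first paragraph is precisely the paper's own proof: the paper applies Dynkin's formula, identifies $\mathbb{E}[\ell_{f_\theta}(X_\tau)\mid X_0=x]$ with $\bar{\ell}_{f_\theta}(x)$, labels the remainder term as the Brownian-bridge objective, and stops there. In other words, what the paper actually establishes is
\begin{equation*}
\bar{\ell}_{f_\theta}(x) - \ell_{f_\theta}(x) \;=\; \mathbb{E}\left[\int_0^\tau \nabla^2 \ell_{f_\theta}(X_s)\,\mathrm{d}s \,\Big|\, X_0 = x\right],
\end{equation*}
i.e.\ the Laplacian sits \emph{inside} the integral and the plain difference sits on the left --- the opposite placement of operators from the displayed equation in Lemma~\ref{lem:approx}. (The paper also silently identifies stopped Brownian paths with the bridges of \eqref{eq:BBtrain}, as you do in your second paragraph; that gloss is common to both arguments.)

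The genuine gap is your third paragraph, where you try to upgrade this to the literal statement. The Green's-function inversion cannot do that. Write the Dynkin remainder as $\int_{\mathcal{D}} G(X,z)\,\mathcal{L}\ell_{f_\theta}(z)\,\mathrm{d}z$ and apply $\nabla_X^2$: on the left, $\bar{\ell}_{f_\theta}$ is harmonic (it solves the Dirichlet problem \eqref{eq:pde}), so the left side becomes $-\nabla^2\ell_{f_\theta}(X)$; on the right, $\mathcal{L}_X G(X,z) = -\delta_z(X)$ collapses the integral to the same quantity $-\nabla^2\ell_{f_\theta}(X)$ (up to the diffusion constant). The manipulation therefore yields only the tautology $-\nabla^2\ell_{f_\theta}(X) = -\nabla^2\ell_{f_\theta}(X)$; at no point does the integrand change from $\nabla^2\ell_{f_\theta}(X_s)$ to $\ell_{f_\theta}(X_s)$, and there is no mechanism by which it could. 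Indeed, the identity as displayed in the lemma is not derivable at all: by harmonicity its left side equals $-\nabla^2\ell_{f_\theta}(X)$, whereas its right side is nonnegative because $\ell_{f_\theta}\ge 0$, and these are incompatible whenever $\nabla^2\ell_{f_\theta}(X) > 0$ (e.g.\ the squared loss of the paper's ReLU example, for which the paper itself uses $\nabla^2(f(X)-y)^2 \ge 0$ in Propositions~\ref{rmk:shift} and~\ref{rmk:interp}). The displayed equation is best read as a misplacement of operators in the statement; the provable claim --- and the one the paper's proof actually gives --- is the Dynkin identity above, which your first paragraph already contains. You should stop there rather than pursue the Green's-function step.
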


\begin{proof}
    Recall that Dynkin's formula states 
    $$
    \underbrace{\mathbb{E}[\ell_{f_\theta}(X_\tau) \mid X_0 = x]}_{\bar{\ell}_{f_\theta}(x)} = \ell_{f_\theta}(x) + \underbrace{\mathbb{E}\left[\int_0^\tau \nabla^2 \ell_{f_\theta}(X_s) \mathrm{d}s \mid X_0 = x \right]}_{\mathrm{BB}}
    $$
    where the expectation is taken over Brownian motion sample paths $X_t$.
    The left hand side is equal to the expected loss landscape $\bar{\ell}$, the Brownian bridge loss minimizes the $\mathrm{BB}$ term. 
    
\end{proof}

\section{Lower order derivatives}
As mentioned in the main text, we also have control over the parameters in the low order derivatives through an importance sampling technique. 
Here we will review how this is used to modify the Monte Carlo scheme such that certain points are given more weight in the loss. 
For additional details, please see~\citet{oksendal2003stochastic} and~\citet{pardoux2014stochastic}.
For convenience, we will consider the stochastic process $z_t \equiv (x_t, y_t)$.

\subsection{Exponential Martingale}
Consider first the problem of the Laplace equation solved over a domain $\mathcal{D} \subset \mathbb{R}^{d + k}$ and its connection to Brownian motion.
Specifically, we write the solution of 
\begin{align*}
\frac12 \nabla^2 u &= 0, \quad z \in \mathcal{D} \\
u &= \ell, \quad z \in \partial \mathcal{D}
\end{align*}
according to the following expectation
$$
u(z) := \mathbb{E} \left [ \ell(z_{\tau_{\partial \mathcal{D}}}) \mid z_0 = z\right ]
$$
where 
$$
\tau_{\partial \mathcal{D}} = \inf \{ t > 0 \mid z_t \in \partial \mathcal{D} \}
$$
and $\mathrm{d}Z_t = \mathrm{d}W_t$.

Now suppose we wish to write the solution for the case where there exist first order derivatives with coefficients $\mu(z) : \mathcal{D} \to \mathbb{R}^{d + k}$ corresponding to 
\begin{align}
\frac12 \nabla^2 u_\mu  + \mu^\top (z) \nabla u_\mu &= 0, \quad z \in \mathcal{D} \label{eq:pde_mu} \\
\nonumber u_\mu &= \ell, \quad z \in \partial \mathcal{D}
\end{align}
which involves the following expectation
$$
u_\mu(z) := \mathbb{E} \left [ \ell(Z_{\tau_{\partial \mathcal{D}}}) \mid Z_0 = Z\right ]
$$
with $\tau_{\partial \mathcal{D}}$ defined as before and 
\begin{equation}
    \mathrm{d}Z_t = \mu(Z_t) \mathrm{d}t + \mathrm{d}W_t.
    \label{eq:sde_mu}
\end{equation}
Instead of sampling paths of~\eqref{eq:sde_mu}, we can instead compute the expectation with the original Brownian motion paths weighted by the \emph{exponential martingale} given by
$$
\frac{\mathrm{d}P_\mu}{\mathrm{d}Q} := \exp \left (\int_0^{\tau_{\partial \mathcal{D}}} \mu^\top(Z_s) \mathrm{d} Z_s - \frac12 \int_0^{\tau_{\partial \mathcal{D}}}\mu^\top \mu(Z_s) \mathrm{d}s \right )
$$
where $P_\mu$ denotes the measure associated with the sample paths of~\eqref{eq:sde_mu} and $Q$ is the Wiener measure. 
This then gives us the solution to the new PDE in~\eqref{eq:pde_mu} as
$$
u_\mu := \mathbb{E}_Q \left [ \ell(Z_{\tau_{\partial \mathcal{D}}}) \frac{\mathrm{d}P_\mu}{\mathrm{d}Q} \, \bigg | \, Z_0 = Z\right ]
$$
Since we want to minimize $u_\mu$ for boundary conditions dependent on our learned mapping $f$, we can apply Jensen's inequality to get rid of the $\exp$ term in the exponential martingale which may be subject to numerical errors when approximated using an Euler scheme. 

\subsection{Choosing the right $\mu$}
In our experiments, we choose $\mu$ to be $\xi \nabla \ell(f(x), y)$ for some $\xi >0$ to get the following PDE
$$
\frac12 \nabla^2 u_\mu  + \xi \nabla_z \ell(z)^\top (z) \nabla u_\mu = 0, \quad z \in \mathcal{D}.
$$
In practice, one can always make $\mu$ some function (e.g. a neural network) and separately optimize it to achieve certain properties of the loss landscape. 
We consider the gradient of $\ell$ due to the connection between $\ell$ and the uncertainty around a new point. 
Points with large uncertainty are then given a large weight based on the magnitude of the gradient and influence the loss more. 

\section{Review of the Feynman-Kac Formula}
\label{sec:fk}
The Feynman-Kac formula provides a correspondence between expectations of SDEs and PDEs.
For full details, we refer to~\citet[Chapter 9]{oksendal2003stochastic} which describes the correspondence in the case of boundary value problems as used in this work. 
For self-containment, we provide a review of the formula here.
Often, the Fenyman-Kac formula is presented in its form for solving parabolic PDEs.
Let $X_t$ satisfy the following It\^o diffusion:
$
\mathrm{d}X_t = \mu(t, X_t) \mathrm{d}t + \sigma(t, X_t)\mathrm{d}W_t
$
and consider the following PDE:
\begin{equation}
    \frac{\partial u}{\partial t}(t,x) = \nabla u(t,x)\cdot \mu(t,x) 
     +\frac12\text{Tr}(\sigma\sigma^\top(t,x)(\text{Hess}_{x} u)(t,x)) -r(t,x)u(t,x)
\label{eq:parabolic}
\end{equation}
with terminal condition $u(x,T) = g(x)$.
Then the solution to~\eqref{eq:parabolic}  can be represented by the following expectation:
\begin{equation}
    u(t, x) = \mathbb{E}\left[g(X_t) \exp\left (-\int_0^t r(X_s) \mathrm{d}s \right) \bigg | \; X_0 = x \right].
\end{equation}
In the elliptic case, consider the following PDE solved over a domain $\mathcal{D}$:
\begin{equation}
   0 = \nabla u(x)\cdot \mu(x) 
     +\frac12\text{Tr}(\sigma\sigma^\top(x)(\text{Hess}_{x} u)(x)) -r(x)u(x), \quad x \in \mathcal{D}.
\label{eq:elliptic_app}
\end{equation}
with boundary condition
$$
u(x) = g(x) \quad x \in \partial \mathcal{D}.
$$
Then the solution to~\eqref{eq:elliptic_app}  can be represented by the following expectation:
\begin{equation}
    u(x) = \mathbb{E}\left[g(X_\tau) \exp\left (-\int_0^\tau r(X_s) \mathrm{d}s \right) \bigg | \; X_0 = x \right]
\end{equation}
where $\tau = \inf \{ s > 0 \mid X_s \in \partial \mathcal{D} \}.$

\section{Other Interpretations of the Bridge and PDE}
\label{app:bridge_interp}
Here we describe a few more interpretations of the Brownian bridge augmentation as compared to the original PDE that we are solving. 
Recall that solving the full elliptic PDE by computing hitting times of Brownian motion is prohibitively expensive.
We therefore need an algorithm that is more scalable for the typical problems in machine learning.
We show now how the bridge can be interpreted through two different frameworks.
To do this, we will rely on the following lemma which states that the hitting location of Brownian motion is most likely to be the location closest to the starting point for any stopping time. 

\begin{lemma}[Hitting Location of Brownian Motion]
\label{lem:hitting}
    Let $\{(X^{(i)}, y^{(i)})\}_{i=1}^K$ be a set of boundary centroids ordered such that $\|(X^{(i)}, y^{(i)})- (X_0, y_0)\| < \|(X^{(i+1)}, y^{(i+1)}) - (X_0, y_0)\| $ for all $i = 1…K-1$ and let $\mathcal{B}^{(i)} = \{X, y \mid  \|X^{(i)} - X + y^{(i)} - y\| \leq \epsilon \}$ for some $\epsilon > 0$. Then, $$P\left(X_\tau, y_\tau \in \mathcal{B}^{(i)}\right) > P\left(X_\tau, y_\tau \in \mathcal{B}^{(i +1)}\right)$$ for all $i = 1…K-1$.
\end{lemma}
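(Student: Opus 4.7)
My plan is to prove this by the reflection principle for Brownian motion, applied across the hyperplane that equidistantly separates the centers of $\mathcal{B}^{(i)}$ and $\mathcal{B}^{(i+1)}$. The ordering assumption places the starting point $(X_0,y_0)$ strictly on the $\mathcal{B}^{(i)}$-side of this hyperplane, and reflection exchanges the two equal-radius balls, so a path that first enters $\mathcal{B}^{(i+1)}$ can be bent into one that first enters $\mathcal{B}^{(i)}$ with the same probability weight, plus there is extra probability mass from paths that reach $\mathcal{B}^{(i)}$ without ever crossing the hyperplane.

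More concretely, I would proceed as follows. First, let $c_i = (X^{(i)}, y^{(i)})$ and $c_{i+1} = (X^{(i+1)}, y^{(i+1)})$, and let $H$ be the perpendicular bisector hyperplane of the segment $\overline{c_i c_{i+1}}$, with $R$ the orthogonal reflection across $H$. Since $\mathcal{B}^{(i)}$ and $\mathcal{B}^{(i+1)}$ are both Euclidean balls of radius $\varepsilon$, $R$ maps $\mathcal{B}^{(i)}$ onto $\mathcal{B}^{(i+1)}$. The hypothesis $\|c_i - (X_0,y_0)\| < \|c_{i+1} - (X_0,y_0)\|$ places $(X_0,y_0)$ strictly in the open half-space $H^+$ containing $c_i$. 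Second, consider any path $\omega$ with $X_\tau(\omega) \in \mathcal{B}^{(i+1)}$: since $(X_0,y_0) \in H^+$ and the path terminates in $H^-$, there is a first crossing time $\sigma(\omega) < \tau(\omega)$ with $X_{\sigma(\omega)} \in H$. Define $\Phi(\omega)$ by leaving $\omega$ unchanged on $[0,\sigma]$ and setting $\Phi(\omega)_t = R(X_t(\omega))$ for $t \in [\sigma,\tau]$. By the strong Markov property and invariance of Brownian motion under orthogonal reflection, $\Phi$ is a measure-preserving map on path space; moreover $\Phi(\omega)_\tau \in R(\mathcal{B}^{(i+1)}) = \mathcal{B}^{(i)}$.

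Third, I would argue the inequality in two parts. For the base case $K=2$, the map $\Phi$ is a measure-preserving injection from the event $\{X_\tau \in \mathcal{B}^{(i+1)}\}$ into $\{X_\tau \in \mathcal{B}^{(i)}\}$, which proves the non-strict inequality; strictness comes from the positive-measure set of paths that travel from $(X_0,y_0)$ directly to $\mathcal{B}^{(i)}$ without ever touching $H$ (which have no $\Phi$-preimage, since every $\Phi(\omega)$ crosses $H$ at least once). For general $K$, I would verify that $\Phi$ still lands in the correct event by showing that (a) the original path avoids all other balls on $[0,\tau]$, so on $[0,\sigma]$ the image also avoids them; and (b) after time $\sigma$, the original path also avoids $\mathcal{B}^{(i)}$, whose reflection is $\mathcal{B}^{(i+1)}$, so the image avoids $\mathcal{B}^{(i+1)}$. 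The measure-preserving injection then compares the two probabilities after one groups the remaining balls by whether they are invariant under $R$; contributions from non-symmetric balls can be absorbed by an auxiliary argument on the harmonic extensions $u^{(j)}(z) = P_z(X_\tau \in \mathcal{B}^{(j)})$.

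The main obstacle is exactly this last point: the reflection $R$ need not preserve the other balls $\mathcal{B}^{(j)}$, $j \neq i, i+1$, so a reflected path might inadvertently hit some $\mathcal{B}^{(j)}$ strictly before time $\tau$ even though the original did not, which would push the reflected path out of the event of interest. The clean way to resolve this is to view the hitting probabilities as harmonic functions on $\Omega = \mathbb{R}^{d+k} \setminus \bigcup_j \mathcal{B}^{(j)}$ with indicator boundary data and combine the reflection argument with the maximum principle applied to $v(z) := u^{(i)}(z) - u^{(i+1)}(z)$: one shows $v \geq 0$ on $H \cap \overline{\Omega}$ via reflection, $v = 1$ on $\partial \mathcal{B}^{(i)}$, $v = -1$ on $\partial \mathcal{B}^{(i+1)}$, and $v = 0$ on the remaining boundary pieces, and then the maximum principle on the half-domain $\Omega \cap H^+$ gives $v(X_0,y_0) > 0$ since $(X_0,y_0)$ lies strictly in the interior of $H^+$.
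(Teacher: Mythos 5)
Your route is genuinely different from the paper's. The paper's proof replaces the stopping time $\tau$ by a fixed time, treats $X_\tau$ as a Gaussian centered at $(X_0,y_0)$, and compares the resulting densities at the two centroids; it never engages with the first-hitting structure of $\tau$ or with the presence of the other balls. Your reflection-coupling argument across the perpendicular bisector $H$ of $\overline{c_i c_{i+1}}$ is the standard rigorous way to prove this kind of ``closer target is hit first more often'' statement, and for $K=2$ (with $\epsilon$ small enough that neither ball meets $H$) your argument is correct and complete, including the strictness via paths that reach $\mathcal{B}^{(i)}$ without touching $H$. In that regime your proof is strictly stronger than the paper's, which buys actual control of the hitting event rather than a fixed-time density heuristic.

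However, there is a genuine gap in your treatment of general $K$, and it is not one that can be patched: the obstruction you correctly identify (the reflection $R$ does not preserve the balls $\mathcal{B}^{(j)}$, $j\neq i,i+1$) reappears verbatim in your proposed fix. To run the maximum principle on $\Omega\cap H^{+}$ you need $v=u^{(i)}-u^{(i+1)}\geq 0$ on $H\cap\overline{\Omega}$, but establishing that at a point $z\in H$ requires exactly the same reflection argument you could not complete from $(X_0,y_0)$, since the non-symmetric balls still break the coupling; ``absorbed by an auxiliary argument'' is doing all the work and no such argument exists in general. Indeed, the lemma as stated is false for $K>2$ without further hypotheses: if $\mathcal{B}^{(i+1)}$ sits immediately behind $\mathcal{B}^{(i)}$ as seen from $(X_0,y_0)$ while a slightly more distant $\mathcal{B}^{(i+2)}$ is unobstructed, harmonic-measure shielding makes $P(X_\tau\in\mathcal{B}^{(i+1)})$ smaller than $P(X_\tau\in\mathcal{B}^{(i+2)})$, violating the claimed monotonicity in distance. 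So the honest conclusion is that your method proves the $K=2$ case rigorously, and that the general case requires additional separation or symmetry assumptions on the centroids that neither your argument nor the paper's supplies.
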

\begin{proof}
    Since $X_t \sim \mathcal{N}(X_0, y_0, t)$, the variance of $X_t$ increases with time. Then for any $\tau > 0$, $P(X_\tau \in \mathcal{B}^{(i)}) \propto \exp(- \|X^{(i)} - X_0 + y^{(i)} - y_0 \| / 2 \tau ) $. 
    Integrating over the displacement as $\varepsilon$, we can compare the term inside the exponential to show that $\int_0^\epsilon  \|X^{(i)} - X_0 + y^{(i)} + \varepsilon - y_0 \| \mathrm{d} \varepsilon < \int_0^\epsilon \|X^{(i+1)} - X_0 + y^{(i+1)} - y_0 + \varepsilon \| \mathrm{d}\varepsilon $, and then the probabilities of being in $\mathcal{B}^{(i)} > \mathcal{B}^{(i+1)}$. 
\end{proof}
Lemma~\ref{lem:hitting} allows us to sample Brownian bridges with endpoints mapped to the nearest starting points rather than sample paths with random stopping times.
Next we describe alternative implementations of the bridge algorithm in terms of Dynkin's theorem and an elliptic PDE with a source term. 

Next, we use a fixed $\tau$ for our implementation so we have a fixed time and thus a finite time algorithm for sampling paths for the expectation.
Since the variance of the sample paths is given by $\sigma$, we note that there is an ambiguity between $\sigma$ and $\tau$. 
This is easy to show for two Brownian motions $W_t, B_t$ with $W_{\tau_1} \sim \mathcal{N}(0, \tau_1)$ and $B_{\tau_2} \sim \mathcal{N}(0, \sigma \tau_2)$, then their distributions will be equal if $\sigma = \frac{\tau_1}{\tau_2}$.  
We then fix $\tau$ in our experiments and optimize over $\sigma$ in the implementation.

\subsection{Intepretation via Dynkin's Formula}
Dynkin's formula states the following:
$$
\mathbb{E}[\ell(f(X_\tau), y_\tau) \mid X_0 = x] := \ell(f(X_0), y_0) + \mathbb{E}\left [ \int_0^\tau \mathcal{A} \ell(f(X_s), y_s) \mathrm{d}s  \mid X_0 = x\right ]
$$
which allows to rewrite the expectation of a stopping time with respect to the integral of the generator $\mathcal{A}$ of $X_s, y_s$ applied to the function $\ell$. 
In our case, $\mathcal{A}$ is related to the PDE in~\eqref{eq:pde} we are trying to impose. 
We can consider an approximation of this operator through the following
\begin{align*}
\mathbb{E}[\ell(f(X_\tau), y_\tau) \mid X_0 = x] &=\mathbb{E} \left [ \int_0^\tau \lim_{\Delta t \to 0} \frac{1}{\Delta t } ( \mathbb{E}[\ell(f(X_{s + \Delta t}), y_{s + \Delta t})] - \ell(f(X_s), y_s)) \mathrm{d}s \right ] + \ell(f(X_0), y_0) \\
&\approx \mathbb{E} \left [  \sum_{i=1}^{N_T} \frac{1}{\Delta t}(\mathbb{E}[\ell(f(X_{i + 1}), y_{i+1})] - \ell(f(X_i), y_i) )
 \right ]+ \ell(f(X_0), y_0).
\end{align*}
These expectations can be computed over sample paths with a fixed $\tau$.
In general, $\tau$ can be sampled as some function of the distance between the endpoints. 

\subsection{Interpretation via an elliptic PDE with a source term}
In the original formulation, all sources of variation arose from the boundary condition which was based on the loss value for a ball around each data point.
We can instead think of extending the boundary to be at infinity and only consider instead source terms given by the loss function at different points. 
This corresponds to the following PDE:
\begin{equation}
\nabla^2 u(X,y) = \ell(f_\theta(X), y), \quad (X, y) \in \mathcal{D}
    \label{eq:pde_source}
\end{equation}
with $u(X,y) = \ell(f_\theta(X), y)$ for $(X,y) \in \partial \mathcal{D}$.
This corresponds to the stochastic solution given by
$$
u(X,y) := \mathbb{E}\left[ \ell(f_\theta(X_\tau), y_\tau) +  \int_0^\tau -\ell(f_\theta(X_s), y_s) \mathrm{d}s \mid X_0 = X, y_0 = y \right].
$$
Using this representation, we can again sample Brownian bridges between endpoints but instead include an integral over the bridge points instead of computing the expectation over all points along the bridge path.
For the new PDE in~\eqref{eq:pde_source}, $u$ is no longer harmonic; rather it is subharmonic. 
Therefore, only the maximum principle is satisfied, not the minimum principle. 
This is formalized in the following proposition:
\begin{proposition}
    Using the interpretation in~\eqref{eq:pde_source}, the expected loss is bounded from above by the loss on the boundary of the domain. 
\end{proposition}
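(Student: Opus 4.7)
The plan is to deduce the upper bound directly from the weak maximum principle for subharmonic functions. The first observation is that the loss $\ell$ is nonnegative (assumed throughout the paper), so the right-hand side of~\eqref{eq:pde_source} is pointwise nonnegative on $\mathcal{D}$. Hence $\nabla^2 u(X,y) \geq 0$ in the interior of $\mathcal{D}$, which is exactly the definition of $u$ being subharmonic on $\mathcal{D}$.

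The main step is then to invoke the classical weak maximum principle: any subharmonic function on a bounded open domain, continuous up to the boundary, attains its supremum on $\partial \mathcal{D}$. Applying this to $u$ and substituting the boundary condition $u|_{\partial \mathcal{D}} = \ell(f_\theta(\cdot), \cdot)$ immediately gives
\[
u(X,y) \;\leq\; \max_{(X',y') \in \partial \mathcal{D}} \ell(f_\theta(X'), y') \qquad \forall (X,y) \in \mathcal{D},
\]
which is exactly the claim. As a parallel check, I would also present the probabilistic version: the Feynman--Kac representation given just before the proposition contains a $-\int_0^\tau \ell(f_\theta(X_s), y_s)\,\mathrm{d}s$ term inside the expectation; since $\ell \geq 0$ this term is nonpositive, so dropping it yields $u(X,y) \leq \mathbb{E}[\ell(f_\theta(X_\tau), y_\tau)] \leq \max_{\partial \mathcal{D}} \ell$, recovering the same bound via stochastic calculus and clarifying why the direction of the inequality in~\eqref{eq:pde_source} (as opposed to~\eqref{eq:pde}) forces an asymmetric conclusion.

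The main obstacle is more conceptual than technical: one has to verify that $u$ has enough regularity for the maximum principle to apply in its classical form. Since $f_\theta$ in the preceding sections is a ReLU network, $\ell(f_\theta(\cdot),\cdot)$ is only Lipschitz rather than $C^2$, so the statement should rigorously be phrased via the maximum principle for weak or viscosity subsolutions, or equivalently through submean-value inequalities on balls. A second point worth flagging explicitly is the asymmetry with Proposition~\ref{prop:boundloss}: because $u$ is only subharmonic (not harmonic), the minimum principle fails, and therefore no matching lower bound in terms of the infimum of the boundary loss is available. The proof should emphasize this loss of symmetry, since it is the essential qualitative distinction between this source-term formulation and the original elliptic interpretation.
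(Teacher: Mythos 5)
Your proof is correct and takes essentially the same route as the paper, whose entire argument is the one-line observation that $u$ is subharmonic in the formulation of~\eqref{eq:pde_source} so the maximum principle applies; you simply make explicit the reason for subharmonicity (nonnegativity of $\ell$ forces $\nabla^2 u \geq 0$) and add a consistent probabilistic cross-check via the stochastic representation. The regularity caveat and the remark on the failure of the minimum principle are sensible additions but not needed to match the paper's argument.
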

\begin{proof}
    In~\eqref{eq:pde_source}, $u$ is subharmonic and the maximum principle applies. 
\end{proof}

\subsection{Minimizing the Source Term}
Putting the two together, we can minimize the source term to develop a computationally effective way to satisfy the PDE.
We make the substitution in~\eqref{eq:pde_source} such that we can guarantee the maximum principle holds. 
We then apply Dynkin's formula with $\mathcal{A}(\ell(f(X_s), y_s)) = \ell(f(X_s),y_s)$ to get
$$
\min \mathbb{E}[\ell(f(X_\tau), y_\tau) \mid X_0 = x] = \min \ell(f(X_0), y_0) + \mathbb{E}\left [ \int_0^\tau \ell(f(X_s), y_s) \mathrm{d}s  \mid X_0 = x\right ].
$$
Minimizing this as described in the main text has the advantage of being computationally efficient as well as at the minimum corresponding to the original problem.
We can also see how the harmonic property is being violated by checking the value of $\mathbb{E}\left [ \int_0^\tau \ell(f(X_s), y_s) \mathrm{d}s  \mid X_0 = x\right ].$
At the minimum, $\ell(f(X_s),y_s) =0 $ and we will recover the original operator given by $\mathcal{A}\ell(f(X_s),y_s) = 0$.

\begin{figure*}
    \centering
    \includegraphics[width=\textwidth]{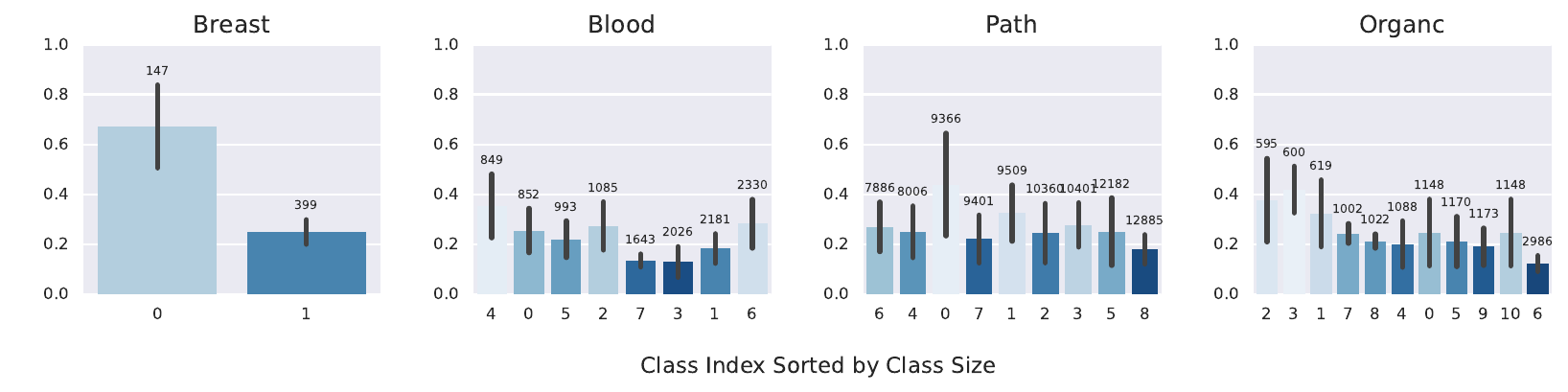}
    \caption{Test data uncertainty score (min-max normalized) of each class for different datasets. X-axis are class indexes sorted by data size; the errorbars present $\pm$ standard deviation; the number above each bar is the class size. }
    \label{fig:medmnistUS}
\end{figure*}

\section{Additional Experiments}
In this section we present additional experiments including ablation studies on some parameters of elliptic training, further empirical results on the robustness of the model trained with elliptic regularization, as well as additional benchmarks for the data imbalance/distribution shift regime against group-informed algorithms. 

\label{app:addexp}
\subsection{Ablation Study}
\begin{table}[]
\centering
    \caption{Ablation Study on Waterbirds}
    \begin{tabular}{@{}lll@{}}
    \toprule
    & Avg(\%) & Worst(\%)\\ \midrule
     Baseline ($\xi=1, \sigma=0.1$)   & \textbf{92.0$_{\pm{0.25}}$} & \textbf{84.1$_{\pm{1.1}}$}\\ \midrule
     No Drift ($\xi=0, \sigma=0.1$) & 91.2$_{\pm{0.5}}$ & 81.7$_{\pm{2.6}}$\\
     Moderate Drift ($\xi=2, \sigma=0.1$) & 92.0$_{\pm{0.05}}$ & 81.2$_{\pm{2.9}}$ \\
     Large Drift ($\xi=10, \sigma=0.1$) & 91.2$_{\pm{0.6}}$ & 82.5$_{\pm{0.6}}$\\ \midrule
     Large Diffusion ($\xi=1, \sigma=1$) & 89.9$_{\pm{1.1}}$ & 75.6$_{\pm{0.5}}$ \\
     Small Diffusion ($\xi=1, \sigma=0.01$) & 91.1$_{\pm{0.6}}$ & 81.1$_{\pm{1.8}}$ \\ \bottomrule
    \end{tabular}
    \label{tab:waterbirdablation}
\end{table}
\label{app:ablation}
We provide ablation studies on some hyperparameters used in elliptic training as well as the robust performance against artificial noise induced in test data.
For the ablation studies, we mainly focus on the Waterbirds dataset. 

The hyperparameters we focus on include $\xi$ and $\sigma_b$. 
Parameter $\xi$ is related to the strength of the importance weighting, or also as the magnitude of the drift derived from Girsanov's Theorem; $\sigma_b$ is the diffusion coefficient for data during sampling of Brownian bridges. 
We show in Table~\ref{tab:waterbirdablation} that the importance weighting is necessary to improve model performance in group-imbalanced classification setting, and a moderate diffusion coefficient should be selected to achieve the best possible result.

We also show that models trained with elliptic regularization are robust against noise. 
We first train a model using elliptic regularization and clean data and then we add artificial Gaussian noise with different standard deviations to the test data.
Table~\ref{tab:waterbirdnoise} showcases the robust result of average and worst-group accuracy.
\begin{table}[]
\centering
    \caption{Noisy evaluation on Waterbirds.}
    \begin{tabular}{@{}lll@{}}
    \toprule
    & Avg(\%) & Worst(\%)\\ \midrule
     Baseline  & 92.0$_{\pm{0.25}}$ & 84.1$_{\pm{1.1}}$\\ \midrule
     Noise Std = 0.1 & 92.1$_{\pm{1.6}}$ & 79.8$_{\pm{3.0}}$\\
     Noise Std = 0.5 & 86.0$_{\pm{0.8}}$ & 71.7$_{\pm{4.7}}$\\
     \bottomrule
    \end{tabular}
    \label{tab:waterbirdnoise}
\end{table}

\subsection{Controlled Computation Expense}
In this section we reduce the number of epoch for Elliptic training such that the number of function evaluation is similar to the benchmarks. Specifically we rerun the MedMNIST experiments: both ERM and mixup is trained with 500 epochs, while Elliptic training is experimented with 5 bridge timesteps and 100 epochs. We show in Table~\ref{tab:medmnistcontroll} that Elliptic training continues to outperform ERM and mixup with similar number of function evaluations. 

\begin{table}[]
    \centering
    \footnotesize
    \caption{Average and worst class classification accuracy for selected Med-MNIST dataset with a controlled number of function evaluations.}
    \begin{tabular}{@{}lllllllll@{}}
    \toprule
                      & \multicolumn{2}{c}{Breast} & \multicolumn{2}{c}{Blood}          & \multicolumn{2}{c}{Path}      & \multicolumn{2}{c}{OrganC}        \\ \midrule
        Algorithm      & Avg(\%)       & Worst(\%)           & Avg(\%)         & Worst(\%)        & Avg(\%)        & Worst(\%) & Avg(\%)            & Worst(\%) \\ \midrule
        ERM           & 80.0$_{\pm3.4}$ & 32.4$_{\pm13.8}$ & 81.4$_{\pm2.7}$ & 56.3$_{\pm17.3}$ & 55.7$_{\pm3.8}$ & 10.7$_{\pm8.6}$ & 80.7$_{\pm2.9}$ & 52.9$_{\pm6.9}$ \\
        mixup         & 83.1$_{\pm2.4}$ & 47.1$_{\pm10.9}$ & 78.8$_{\pm3.1}$ & 41.7$_{\pm19.3}$ & 54.3$_{\pm2.1}$ & 1.2$_{\pm2.1}$ & 81.3$_{\pm2.2}$ & 56.8$_{\pm11.0}$ \\
        Elliptic      & 86.7$_{\pm0.9}$ & 67.6$_{\pm4.9}$ & 81.5$_{\pm0.7}$ & 63.4$_{\pm7.0}$ & 57.8$_{\pm0.7}$ & 12.8$_{\pm7.1}$ & 85.8$_{\pm0.6}$ & 64.6$_{\pm6.4}$ \\
        E + IW & 86.9$_{\pm1.6}$ & 67.6$_{\pm3.8}$ & 81.5$_{\pm1.0}$ & 62.9$_{\pm9.7}$ & 56.3$_{\pm0.3}$ & 7.1$_{\pm3.5}$ & 86.0$_{\pm1.0}$ & 60.4$_{\pm9.2}$ \\ \bottomrule
    \end{tabular}
    \label{tab:medmnistcontroll}
\end{table}

\subsection{Computing Pairwise Distance}
In the main text, we noted that the pairwise distance computation does not affect the model performance. 
Using the medMNIST datasets, we empirically demonstratet this in Table~\ref{tab:medmnistnopair} where the model's robust performance is not affected.
We suspect that the randomness in the batch leads to a general diffusive behavior that enforces the elliptic operator. 

\begin{table}[]
    \centering
    \footnotesize
    \caption{Average and worst class classification accuracy for selected Med-MNIST dataset with pairwise distance computation (P-elliptic) and without (elliptic).}
    \begin{tabular}{@{}lllllllll@{}}
    \toprule
                      & \multicolumn{2}{c}{Breast} & \multicolumn{2}{c}{Blood}          & \multicolumn{2}{c}{Path}      & \multicolumn{2}{c}{OrganC}        \\ \midrule
        Algorithm      & Avg(\%)       & Worst(\%)           & Avg(\%)         & Worst(\%)        & Avg(\%)        & Worst(\%) & Avg(\%)            & Worst(\%) \\ \midrule
        Elliptic      & 87.6$_{\pm1.8}$ & 67.6$_{\pm4.4}$ & 85.5$_{\pm1.6}$ & 60.2$_{\pm14.9}$ & 62.3$_{\pm2.5}$ & 26.4$_{\pm10.2}$ & 87.7$_{\pm0.9}$ & 65.9$_{\pm3.4}$ \\
        E +IW & 87.3$_{\pm0.9}$ & 67.6$_{\pm3.0}$ & 84.1$_{\pm4.0}$ & 68.0$_{\pm13.7}$ & 62.7$_{\pm1.8}$ & 20.9$_{\pm6.8}$ & 88.0$_{\pm0.7}$ & 66.2$_{\pm4.8}$ \\ \midrule
        P-Elliptic      & 87.3$_{\pm1.8}$ & 68.1$_{\pm3.9}$ & 85.3$_{\pm1.8}$ & 61.9$_{\pm13.2}$ & 62.7$_{\pm2.3}$ & 24.7$_{\pm9.7}$ & 87.9$_{\pm1.0}$ & 65.6$_{\pm3.3}$ \\
        P + IW & 87.6$_{\pm1.1}$ & 68.1$_{\pm3.2}$ & 84.3$_{\pm3.3}$ & 67.0$_{\pm10.9}$ & 62.7$_{\pm1.4}$ & 23.6$_{\pm8.4}$ & 88.2$_{\pm0.8}$ & 65.5$_{\pm4.5}$ \\\bottomrule
    \end{tabular}
    \label{tab:medmnistnopair}
\end{table}

\subsection{Additional Comparisons}
\label{app:controlled}
In this subsection we compare Elliptic regularization training with importance weighting with algorithms that leverage the group-labels during training. 
This set of algorithms includes IRM~\citep{arjovsky2019invariant}, IB-IRM~\citep{ahuja2021invariance}, V-Rex \citep{krueger2021out}, CORAL~\citep{sun2016deep}, GroupDRO~\citep{sagawa2019distributionally}, DomainMix~\citep{xu2020adversarial}, Fish~\citep{shi2021gradient}, LISA~\citep{yao2022improving}. 
We show in Table~\ref{tab:uncertaintybridge_group} that our one-stage, group-oblivious training algorithm is comparable to these group-informed algorithms in all three classification datasets. 
Elliptic regularization achieved the best average and third-best worst group accuracy for waterbirds, results comparable to IRM and CORAL in CelebA, and the best accuracy for Camelyon17. 
\begin{table*}[]
\centering
    \caption{Robust classification accuracy under imbalance subpopulations and domain shift (grouped-informed).}
    \begin{tabular}{@{}llllll@{}}
    \toprule
     Algorithm       & \multicolumn{2}{c}{WaterBirds} & \multicolumn{2}{c}{CelebA} & Camelyon17\\  \toprule
                     & Avg(\%) & Worst(\%) & Avg(\%) & Worst(\%) & Avg(\%)\\ \midrule
     IRM \citep{arjovsky2019invariant}         & 87.5$_{\pm0.7}$ & 75.6$_{\pm3.1}$ & 94.0$_{\pm0.4}$ & 77.8$_{\pm3.9}$ & 64.2$_{\pm8.1}$\\ \midrule
     IB-IRM   \citep{ahuja2021invariance}    & 88.5$_{\pm0.6}$ & 76.5$_{\pm1.2}$ & 93.6$_{\pm0.3}$ & 85.0$_{\pm1.8}$ & 68.9$_{\pm6.1}$\\ 
     V-Rex   \citep{krueger2021out}     & 88.0$_{\pm1.0}$ & 73.6$_{\pm0.2}$ & 92.2$_{\pm0.1}$ & 86.7$_{\pm1.0}$ & 71.5$_{\pm8.3}$\\
     CORAL   \citep{sun2016deep}     &  90.3$_{\pm1.1}$& 79.8$_{\pm1.8}$ & 93.8$_{\pm0.3}$ & 76.9$_{\pm3.6}$ & 59.5$_{\pm7.7}$\\
     GroupDRO \citep{sagawa2019distributionally}   & 91.8$_{\pm0.3}$ & 90.6$_{\pm1.1}$ & 92.1$_{\pm0.4}$ & 87.2$_{\pm1.6}$ & 68.4$_{\pm7.3}$\\ 
     DomainMix \citep{xu2020adversarial}   & 76.4$_{\pm0.3}$& 53.0$_{\pm1.3}$  & 93.4$_{\pm0.1}$ & 65.6$_{\pm1.7}$ & 69.7$_{\pm5.5}$\\
     Fish  \citep{shi2021gradient}       & 85.6$_{\pm 0.4}$ & 64.0$_{\pm0.3}$& 93.1$_{\pm0.3}$ & 61.2$_{\pm2.5}$ & 74.7$_{\pm7.1}$\\
     LISA  \citep{yao2022improving}       & 91.8 $_{\pm0.3}$& 89.2$_{\pm0.6}$ & 92.4$_{\pm0.4}$ & 89.3$_{\pm1.1}$ & 77.1$_{\pm6.5}$\\ \midrule
     Elliptic + IW & 92.0$_{\pm{0.25}}$ & 84.1$_{\pm{1.1}}$ & 91.3$_{\pm0.3}$ & 77.4$_{\pm4.5}$ & 77.9$_{\pm3.0}$\\ \bottomrule
    \end{tabular}
    \label{tab:uncertaintybridge_group}
\end{table*}

\label{app:addcomp}

\section{Dataset Details}
We briefly describe the contents and goals of each dataset used in our experiments, provide details about data preprocessing, and reference to related publically available codebases. 
\label{app:datasets}
\subsection{Balanced/In-distribution Experiments}
\textbf{CIFAR10, CIFAR100, Tiny-Imagenet 200:} For these well-known, small-scale image classification datasets, we followed preprocessing pipeline in \cite{zou2023mixupe}, which is publically available in the \href{https://github.com/oneHuster/mixupE.git}{mixupE repository}. 

\textbf{AirFoil, NO2:} These are publically available tabular datasets with continuous labels for regression tasks. We applied preprocessing to these two datasets following \cite{yao2022c}. The preprocessing is publically available in the \href{https://github.com/huaxiuyao/C-mixup.git}{c-mixup repository}

The AirFoil (AirFoil Self-Noise) dataset \citep{Brooks2014air} aims to predict acoustic testing results given the physical features of two and three-dimensional airfoil and the wind tunnel environment. There are 1503 instances in AirFoil; a min-max normalization is applied to the data; there are 1003, 300, and 200 data instances for training, validation, and testing respectively. 

The NO2 dataset is included in Statlib \citep{kooperberg1997statlib}. The dataset studies air pollution and its relationship to traffic volume and meteorological variables. The dataset is collected by the Norwegian Public Roads Administration on Alnabru in Oslo between October 2001 and August 2003. The response variable column 1 consists of the hourly logged concentration of NO2 particles. There is no preprocessing applied; we split 200, 200, and 100 for training, validation, and testing respectively. 

\subsection{Group Imablance/Distributional Robust Experiments}
\subsubsection{Classification}
\textbf{Waterbirds, CelebA, and Camelyon: } For these three datasets, we followed the preprocessing of \cite{han2022umix}. We applied the same data preprocessing, which is available in the \href{https://github.com/TencentAILabHealthcare/UMIX.git}{UMIX repository}.

The Waterbirds dataset \citep{Koh2021WILDS} aims to classify whether the bird is a waterbird or a
landbird. This dataset has four predefined subpopulations including \textit{landbirds on land}, \textit{landbirds on water}, \textit{waterbirds on land}, and \textit{waterbirds on water}. Imbalanced group/subpopulation exist in the training set: the largest subpopulation is \textit{landbirds on land} with 3,498 samples, while the smallest subpopulation is \textit{landbirds on water} with only 56 samples. 

 CelebA \citep{sagawa2019distributionally} is a well-known large-scale face dataset. We predict the color of the human hair as \textit{blond} or \textit{not blond}. There are four imbalanced subpopulations based on gender and hair color including \textit{dark hair, female}, \textit{dark hair, male}, \textit{blond hair, female} and \textit{blond hair, male} with 71,629, 66,874, 22,880, and 1,387 training samples respectively.

Camelyon17 \citep{bandi2018detection} is a pathological image dataset with over 450, 000 lymph node scans for predicting the existence of cancer tissue in a patch. The training data consists of scans from three hospitals, while the validation and test data are sampled from other hospitals. Distribution and domain shifts exist in this data as the classification requires generalization across different hospitals and coloring methods. Due to the complexity of the data, especially considering that different coloring methods are observed even in samples from the same hospital, there are no reliable subpopulation labels of Camelyon17. We applied the official split scheme of this dataset. 

\subsubsection{Regression}
Crime consists of demographic and economic statistics of different communities which 
For SkillCraft, 
RCF-MNIST is a simulated dataset derived from Fashion-MNIST~\citep{yao2022c}.

\textbf{SkillCraft, Crime: }
Both of these datasets are obtained from the UCI data repository. The preprocessing applied in our experiments is also publically available in the \href{https://github.com/huaxiuyao/C-mixup.git}{c-mixup repository}

SkillCraft (SkillCraft1 Master)\citep{Blair2013skill} contains video game telemetry data from real-time strategy (RTS) games to explore the development of expertise. The goal is to predict input action latency based on 17 player-related parameters in the game, such as the Cognition-Action-cycle variables and the Hotkey Usage variables. Missing data are filled by mean padding on each attribute. Levels of competitors, identified through ``League Index" variable, are used as the domain information. We split 4,1,3 domains into training, validation, and testing subsets, which contain 1878, 806, 711 data instances, respectively. We train to predict the mean action latency of video game players in perception-action cycles; we treat “LeagueIndex" as domain information and generalize predictions across different leagues. 

Crime (Communities And Crimes)\citep{Redmond2009crime} is a tabular dataset that aims to predict violent crimes per capita. The 122 attributes of this dataset combine socio-economic data from the
1990 US Census, law enforcement data from the 1990 US LEMAS survey, and crime data from
the 1995 FBI UCR. Following the description of \cite{yao2022c}, we applied a min-max to normalize all numeric features into [0,1]. The missing values are filled with the average values of the corresponding attributes. The 46 different State-IDs are used as the domain information, and we split the dataset into training, validation, and test sets into subsets containing 31, 6, and 9 disjoint domains. There are 1,390, 231, and 373 instances in the training, validation, and testing subsets respectively. We train to predict the total number of violent crimes(per 100K population) of some states and aim to generalize to unseen states.

\textbf{RCF-MNIST} The RCF-MNIST (Rotated-Colored Fashion-MNIST) is a simulated dataset based on Fashion-MNIST \cite{yao2022c}. The main goal of this simulation is to understand the model performance under spurious correlation in the training dataset. The simulation for the training set contains two steps: for a specific data sample, 1) choose a normalized angle of rotation $g \in [0,1]$, this angle of rotation is used as the label; then 2) color the normalized RGB vector of the original white pixel into $[1 - g, 0, g]$. The color and the angle create a spurious correlation. For testing data, the coloring is reversed; hence the testing data would have the coloring of $[g, 0, 1 - g]$ for a chosen angle of rotation $g$. We used an 80-20 split to split the training and testing data. We train the model to predict the rotation angle of the object in the images.

\subsection{Class Imbalance MedMnist Experiments}
The Med-MNIST dataset consists of medical image datasets of different scales. The details of these datasets can be found in \cite{yang2023medmnist}. We are using the official training and testing split of these datasets. All of the images for this set of experiments are normalized to mean = 0.5 and std = 0.5. Imbalanced class exists in all of the selected datasets in our experiments including BreastMNIST, BloodMNIST, PathMNIST, and OrganCMNIST. 

BreastMNIST aims to use ultrasound images to identify the existence of malignant breast cancer. There are 780 images in total where a 70-10-20 data split for training, validation, and testing is applied. Among the training data, there are 147 malignant labels and 399. An official data split of 70-10-20 is applied for each dataset to construct the training, validation, and testing subsets respectively. 

BloodMNIST consists of microscopic images of blood cells. The 17092 images are obtained from individuals without infection, hematologic, or oncologic disease and free of any pharmacologic treatment during blood collection. The goal is to identify the 8 different cell types that exist in the dataset. An official data split of 70-10-20 is applied for each dataset to construct the training, validation, and testing subsets respectively. 

PathMNIST is collected from 100,000 non-overlapping image patches from hematoxylin \& eosin stained histological images. The dataset aims to classify the 9 types of tissues. According to the description of the official data splits, the training and validation data is obtained from one clinical center, while the test data is curated from a different clinical center. Hence, to test performance under class imbalance and avoid additional distribution shifts, we only used the official training split as training data and validation data as testing data in our experiments. 

OrganCMNIST is derived from 3D computed tomography (CT) images from Liver Tumor Segmentation Benchmark (LiTS). These 3D images are then preprocessed into different views \citep{yang2023medmnist}. The OrganC images consist of the coronal view of these 3D images. The goal is to classify the 11 body organs from these CT scans. The official data split is applied for this dataset. There are over 10,000 samples in this dataset. 
\subsection{Preprocessing}
Our data preprocessing follows the preprocessing pipeline of previous works such as mixupE, UMIX, and c-mixup. 
For CIFARs, standard normalization is applied; for tiny-imagenet, normalization, RandomCrop (to height=width=64), and RandomHorizontalFlip are applied; for celebA and Camelyon17, normalization and RandomHorizontalFlip is applied; waterbirds' transformation includes normalization, RandomHorizontalFlip and a RandomResizedCrop to crop the resolution to 224*224. The normalizations applied to above image datasets use the recommended means and standard deviation for the respective datasets. Min-max scaling is applied to all features of the regression datasets. We utilize the recommended preprocessing pipeline for MedMNIST datasets, which normalizes the data to 0.5 mean and 0.5 standard deviation.

\section{Hyperparameters}
\label{app:hyper}

Now we describe the hyperparameters/architecture information for each set of experiments. For all of our bridge samples, we sampled according to a uniform discretization over the time range $[0,1]$, except for MedMNIST which we used time range $[0, 0.01]$. We use $n_b$ to denote the number of bridges, and $n_t$ to determine the number of discretization. We use $\sigma_b$

All related training hyperparameters for all experiments are listed in Table \ref{tab:hypertab}. For all experiments, parameter $\xi$ = 1

\begin{table*}[]
    \centering
    \caption{Architecture and Hyperparameter settings for each dataset/experiments. Optim stands for optimizer; Mom stands for momentum; WD stands for weight decay; and LR stands for learning rate. Note, * means the learning rate is annealed by a factor of 10 on epoch 100 and 150.}
    \begin{tabular}{llllllllll}
    \toprule
        Dataset           & Epoch & Batch Size&Optim& Mom & WD&LR       &$n_b$& $n_t$& $\sigma_b$   \\ \toprule
        CIFAR10/CIFAR100  & 200 & 100 & SGD   & 0.9 &$10^{-4}$ & $1^{-1}$*     & 1  & 5    &  0.05        \\
        Tiny-ImageNet 200 & 200 & 100 & SGD   & 0.9 &$10^{-4}$ & $1^{-1}$*     & 1  & 5    &  0.05        \\
        AirFoil           & 100 & 16  & Adam  & -   & -        &$1^{-2}$     & 20 & 5    &  0.05        \\
        NO2               & 100 & 32  & Adam  & -   & -        & $1^{-2}$     & 10 & 5    &  0.01        \\ \midrule
        Waterbirds        & 200 & 32  & Adam  & -   & -        & $1^{-5}$ & 1  & 5    &  0.1         \\
        CelebA            & 50  & 128 & SGD   & 0.9 & 0.1      & $1^{-4}$ & 1  & 5    &  0.1         \\
        Camelyon17        & 5   & 32  & SGD   & 0.9 & 0.01     & $1^{-5}$ & 1  & 5    &  0.1         \\
        Crime             & 100 & 16  & Adam  & -   & -        & $1^{-3}$ & 10 & 5    &  0.05        \\
        SkillCraft        & 100 & 32  & Adam  & -   & -        & $1^{-2}$   & 10 & 5    &  0.05        \\
        RCF-MNIST         &30   & 64  & Adam  & -   & -        & $1^{-5}$ & 10 & 5    &  0.05        \\ \midrule
        MedMNIST          & 500 & 500 & Adam  & -   & -        & $1^{-3}$ & 1  & 10   &  1           \\\bottomrule
    \end{tabular}
    \label{tab:hypertab}
\end{table*}

\subsection{Architectures for Balanced Classification}
We followed mixupE's experiment format~\citep{zou2023mixupe} and used PreActResNet50, PreActResNet101, and Wide-ResNet-28 for benchmarking on CIFAR10/CIFAR100. 
We used PreActResNet18, PreActResNet34, PreActResNet50 for benchmarking on Tiny-imagenet200. These are implementations without pre-trained weights. 

\subsection{Architectures for Regression}
We applied a 3-layer neural network with hidden layer dimension=128 and the LeakyReLU activation with negative\textunderscore{slope}=0.1, which is the same as c-mixup for comparing the regression result \citep{yao2022c}. 

\subsection{Architectures for Group-Imbalance/Distribution Shift Classification}
To be consistent with previous work \cite{han2022umix}, for waterbirds and CelebA, we trained on Pytorch implementation of ResNet50 that is pre-trained on ImageNet; for Camelyon17, we trained on Pytorch implementation of DenseNet121 without pre-trained weights. 

\subsection{Architectures for Regression with distribution shift}
For Crime and SkillCraft, we used a 3-layer neural network with hidden layer dimension of 128 and the LeakyReLU activation function with negative\textunderscore{slope} parameter of 0.1.
For RCF-MNIST, we used ResNet18 pre-trained on ImageNet, but only up to the second to last layer, as a feature extractor. Then we applied a linear layer to predict the angle. 

\subsection{Architectures for MedMNIST imbalance classification}
For all of the MedMNIST datasets, we used a 2-layer neural network with hidden layer dimension of 512. Batch normalization is applied to the output of the first layer. 

\end{document}